\def\1{\bm{1}}
\DeclareMathAlphabet{\mathsfit}{\encodingdefault}{\sfdefault}{m}{sl}
\SetMathAlphabet{\mathsfit}{bold}{\encodingdefault}{\sfdefault}{bx}{n}
\newcommand{\E}{\mathbb{E}}
\newcommand{\R}{\mathbb{R}}
\DeclareMathOperator*{\argmax}{arg\,max}
\DeclareMathOperator*{\argmin}{arg\,min}
\renewcommand{\R}{\mathbb{R}}
\renewcommand{\E}{\mathop{\mathbb{E}}}
\newcommand{\D}{\mathcal{D}}
\renewcommand{\L}{\mathcal{L}}
\newtheorem{theorem}{Theorem}
\newtheorem{lemma}[theorem]{Lemma}
\theoremstyle{definition}
\newtheorem{definition}[theorem]{Definition}
\begin{document}

%
\runningtitle{PreLoad}

%
\runningauthor{Rashid, Hacker, Zhang, Kristiadi, Poupart}

\twocolumn[

\aistatstitle{Preventing Arbitrarily High Confidence on Far-Away Data in Point-Estimated Discriminative Neural Networks}

\vspace{-1em}

\aistatsauthor{ Ahmad Rashid$^{1,4}$  \And Serena Hacker$^{2}$ \And  Guojun Zhang$^{3}$ \AND Agustinus Kristiadi$^{4}$ \And Pascal Poupart$^{1,4}$ }

\vspace{1em}

\aistatsaddress{University of Waterloo$^{1}$ \And University of Toronto$^{2}$ \And  Huawei Noah's Ark Lab$^{3}$ \And Vector Institute$^{4}$ }]



\vspace{1em}

\begin{abstract}
    Discriminatively trained, deterministic neural networks are the \emph{de facto} choice for classification problems.
    However, even though they achieve state-of-the-art results on in-domain test sets, they tend to be overconfident on out-of-distribution (OOD) data.
    For instance, ReLU networks---a popular class of neural network architectures---have been shown to almost always yield high confidence predictions when the test data are far away from the training set, even when they are trained with OOD data.
    We overcome this problem by adding a term to the output of the neural network that corresponds to the logit of an extra class, that we design to dominate the logits of the original classes as we move away from the training data.
    This technique \emph{provably} prevents arbitrarily high confidence on far-away test data while maintaining a simple discriminative point-estimate training.
    Evaluation on various benchmarks demonstrates strong performance against competitive baselines on both far-away and realistic OOD data.
\end{abstract}


\section{INTRODUCTION}
\label{sec:intro}
\begin{figure*}[t]
  \centering
  \includegraphics[width=\linewidth]{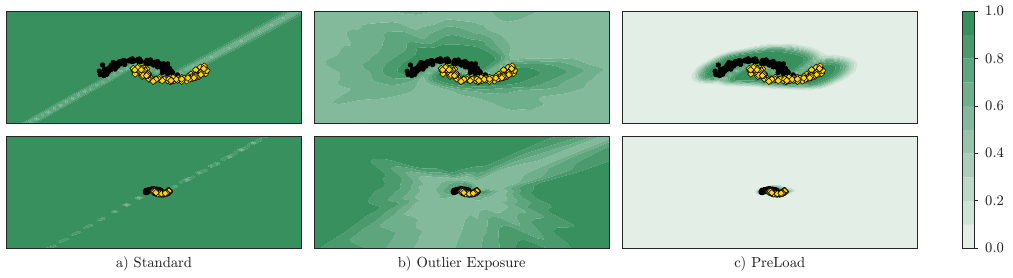}

  \caption{An illustrative example of the confidence of different methods trained on a synthetic binary classification dataset. The shades of green display the confidence of each algorithm with a darker shade signifying a higher confidence. The bottom row gives a zoomed-out view.}
  \label{fig:one}
\end{figure*}

Machine learning has made substantial progress over the last decade with the help of a strong deep learning toolkit, larger data sets, better optimization algorithms, faster and cheaper computation, and a vibrant research community. As machine learning systems continue to be deployed in safety-critical applications, important questions around their robustness and uncertainty quantification continue to be asked. A common expectation in uncertainty quantification is to assign high confidence to test cases close to the training data and low confidence to test cases that are out-of-distribution (OOD).

Recent advances in machine learning are in part due to deep neural networks (DNNs), which are powerful function approximators.
However, DNN classifiers tend to be overconfident for both in-domain examples~\citep{guo2017calibration} and data that is far away from the training examples~\citep{nguyen2015deep}.
\citet{hein2019relu} showed that the ubiquitous ReLU Networks almost always exhibit high confidence on samples that are far away from the training data. 

A number of methods have been proposed to deal with the overconfidence issue in DNNs.
Calibration methods attempt to solve overconfidence of neural network classifiers by various methods including smoothing the softmax distribution~\citep{guo2017calibration,gupta2020calibration,kull2019beyond}, regularization~\citep{muller2019does,thulasidasan2019mixup} and adding additional constraints to the loss function~\citep{kumar2018trainable,lin2017focal}. These methods, however, do not resolve overconfidence issues around OOD data~\citep{NEURIPS2021_8420d359}.
Other methods, both Bayesian~\citep{blundell2015weight,gal2016dropout,kristiadi2020being} and non-Bayesian~\citep{lakshminarayanan2017simple,mukhoti2023deep} have improved OOD detection while training only with the in-domain data distribution.

State-of-the-art methods for OOD detection are typically trained with additional OOD training data with the goal for the classifier to output either high ``None'' class probability~\citep{zhang2017universum,kristiadi2022being} or uniform confidence~\citep{hendrycks2018deep}, in the presence of OOD samples.
\citet{hein2019relu} showed that there is no guarantee that OOD data would be predicted as the ``None'' class.
Moreover, we will demonstrate that these methods still exhibit high confidence when the test points are far away from the data. 

One way of overcoming the problem of arbitrarily high confidence on far-away data is to incorporate generative modeling, either as a \emph{posthoc} method~\citep{lee2018simple,mukhoti2023deep} or as a prior on the data~\citep{meinke2020towards}, into a neural network.
The former assumes that the neural network embedding can be approximated with a Gaussian distribution.
However, on realistic, OOD data, we will demonstrate that these methods are not competitive with the state-of-the-art. The latter assumes a generative model over the data which is a harder problem than the underlying discriminative modeling.

Finally, while Bayesian neural networks \citep{louizos2017multiplicative,kristiadi2022posterior} have also been used to overcome this issue, they are not guaranteed to obtain the optimal confidences on far-away OOD test data \citep{kristiadi2020being}.
While a more sophisticated remedy exists for this \citep{kristiadi2021infinite}, they are specifically constructed to only fix the far-away high confidence, and their detection performance on `nearby' OOD data are more of an afterthought.

In this work, we present our method, called \textbf{Pr}oducing Larg\textbf{e}r \textbf{Lo}gits \textbf{a}way from \textbf{D}ata, or \textbf{PreLoad}, which fulfills the following desiderata: (i) it must maintain the simplicity of the standard discriminative training procedure for DNNs (unlike generative- and Bayesian-based methods), (ii) it must provably be less confident on inputs far away from the training data, and (iii) it must perform well on realistic OOD examples (e.g.\ CIFAR-10 vs.\ CIFAR-100).

We accomplish this by training an extra class, such that under an OOD input, this extra logit is larger than the logits of the other classes as we move farther away from the training data.
This construction provably helps PreLoad almost always predict far-away data as OOD.
Furthermore, the extra class is trained on an auxiliary, OOD dataset, which helps it detect realistic, nearby OOD examples well.

Figure~\ref{fig:one} illustrates the confidence level of PreLoad as we move away from the training data, compared to a standard-trained neural network and a discriminative OOD training approach called Outlier Exposure \citep[OE,][]{hendrycks2018deep}.
Standard neural networks with a softmax output layer exhibit high confidence as we move away from the decision boundary.
OE's confidence initially decreases away from the data, but it becomes high far away as we zoom out. In contrast, PreLoad is confident when close to the data and uncertain when away from it.

\section{PRELIMINARIES}
\label{sec:prelim}

We define a neural network as a function $f: \R^n \times \R^p \to \R^k$ with $(x,\theta) \mapsto f_\theta (x)$, where $\R^n$ is the input space, $\R^k$ the output space, and $\R^p$ the parameter space.
Let $\D:=\{(x_i, y_i)\}_{i=1}^m$ be a training dataset. The standard way of training a neural network is by finding optimal parameters $\theta^*$ such that $\theta^*=\argmin_{\theta}\sum_{i=1}^{m}\ell(f_\theta(x_i),y_i)$ for some loss function $\ell$ such as the cross-entropy loss for classification.

One of the most widely used neural network architectures is a ReLU network. We use the term ReLU networks for feedforward neural networks with piecewise affine activation functions, such as the ReLU or leaky ReLU activation, and a linear output layer. ReLU networks can be written as continuous piecewise affine functions \citep{arora2018understanding,hein2019relu}.

\begin{definition}
    A function $f: \R^n \to \R $ is called piecewise affine if there exists a set of polytopes $\{ Q_r \}_{r=1}^M$ such that their union is $\R^n$ and $f$ is affine in each polytope \citep{arora2018understanding,hein2019relu}.
\end{definition}
Piecewise affine functions include networks with fully connected layers, convolution layers, residual layers, skip connections, average pooling, and max pooling. We will rely on the neural network being a continuous piecewise affine function to prove that our algorithm prevents arbitrarily high confidence on far-away data.





Consider a classification problem where $x$ is the input and $y\in \{1,\cdots,k\}$ denotes the target class. A neural network with a linear output layer in conjunction with the softmax link function can be used to compute the probability $P(y|x)$.  More precisely, consider the following decomposition of the neural network $f_\theta(x)=WG_{\psi}(x)$ where $W \in \R^{k \times d}$ is the weight matrix for the last layer, $G_\psi(x) \in \R^d$ is the neural network embedding and $\theta=\{\psi,W\}$. Each row of $f$ corresponds to the logit $z_c(x)$ of class \(c\): 
\begin{equation}
\label{eq:logit}
    z_c(x) = w_c^\top G_{\psi}(x) + b_c.
\end{equation}
Then the last layer computes class probabilities via a softmax such that:
\begin{equation}
P(y=c|x)=\frac{\exp(w^\top_c G_{\psi}(x) + b_c)}{\sum_{c'=1}^k \exp(w^\top_{c'} G_{\psi}(x) + b_{c'})}
\label{eq:softmax}
\end{equation}
where $w_c \in \R^d$ and $b_c \in \R$ are the parameters of the last layer associated with class $c\in\{1,\cdots,k\}$.

Generally, learning $P(y|x)$ is referred to as discriminative modeling. Generative models, such as GANs~\citep{goodfellow2014generative} and VAEs~\citep{kingma2013auto} learn the distribution of the data $P(x)$.
Meanwhile, class-conditional generative models \citep{mukhoti2023deep} learn $P(x|y)$.

\subsection{Arbitrarily High Confidence on Far-Away Data}

Arbitrarily high confidence on far-away data i.e. data which is far away from the training set \citep{hein2019relu}, can be formalized as observing that the probability of some class approaches $1$ in the limit of moving infinitely far from the training data.


\begin{definition}
A model exhibits far-away arbitrarily high confidence if there exists $x \in \R^n$ and $c \in \{1,\cdots,k\}$ such that
\begin{equation}
\lim_{t\rightarrow\infty} P(y=c|t x) = 1.
\end{equation}
\end{definition}


\cite{hein2019relu} showed that piecewise affine networks (including ReLU networks) with a linear last layer almost always exhibit arbitrarily high confidence far away from the training data.

\section{METHODOLOGY}
\label{sec:method}
\begin{figure*}[t]
    \centering
    \includegraphics[width=\linewidth]{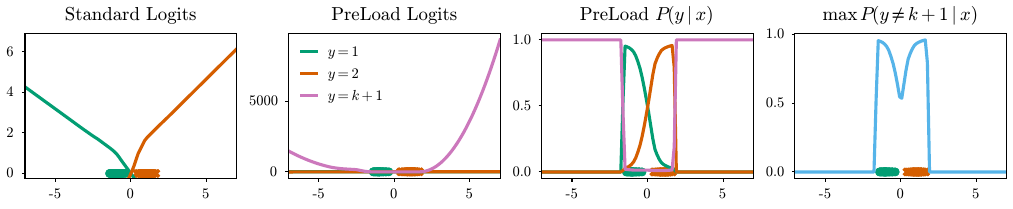}
    \caption{Effect of training with an OOD class with our method on a $1$-D binary classification problem. Standard logits keep on growing when away from the data. We implement the OOD class such that the logits grow much faster for the OOD class compared to the in-domain class. This `fixes' the probabilities and the confidence away from the dataset. Note that the range of y values is larger on the second plot.
    }
    \label{fig:method}
\end{figure*}

Consider a neural network, $f_{\psi,W}$ , trained on a $k$-class classification problem such that the logit $z_c$ is defined according to \eqref{eq:logit} and $P(y|x)$ is computed according to \eqref{eq:softmax}. Arbitrarily high confidence arises when the logit of one class becomes infinitely higher than the logits of the other classes:
\begin{lemma}
    \label{lemma:logits}
    Let \(P(y | x)\) be a classifier defined in \eqref{eq:softmax} and let \(x \in \R^n\).
    If the classifier exhibits arbitrarily high confidence on far-away inputs (i.e., $\lim_{t\rightarrow\infty} P(y|t x) = 1$), then there must exist $c \in \{ 1, \dots, k \}$ such that $\lim_{t\rightarrow\infty}z_c(tx) - z_{c'}(tx) = +\infty$ for all $c'\neq c$.
\end{lemma}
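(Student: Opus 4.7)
The plan is to rewrite the softmax probability for the dominant class in a form that isolates pairwise logit differences, and then use monotonicity of \(\exp\) together with the fact that a finite sum of non-negative terms vanishes if and only if each term does. By hypothesis there exists a class \(c\) with \(\lim_{t\to\infty} P(y=c \mid tx) = 1\); this is the class whose logit we will show dominates all others.

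First I would divide numerator and denominator of \eqref{eq:softmax} (evaluated at \(tx\)) by \(\exp(z_c(tx))\) to obtain
\begin{equation*}
P(y=c \mid tx) = \frac{1}{1 + \sum_{c' \neq c} \exp\bigl(z_{c'}(tx) - z_c(tx)\bigr)}.
\end{equation*}
The hypothesis \(P(y=c \mid tx) \to 1\) is then equivalent to
\begin{equation*}
\lim_{t\to\infty} \sum_{c' \neq c} \exp\bigl(z_{c'}(tx) - z_c(tx)\bigr) = 0.
\end{equation*}

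Next, since each summand is strictly positive and the sum is finite with \(k-1\) terms, the sum tends to \(0\) only if every individual term does. Hence for each \(c' \neq c\),
\begin{equation*}
\lim_{t\to\infty} \exp\bigl(z_{c'}(tx) - z_c(tx)\bigr) = 0,
\end{equation*}
and because \(\exp\) is continuous, strictly increasing, and \(\exp(u)\to 0\) iff \(u \to -\infty\), this is equivalent to \(z_{c'}(tx) - z_c(tx) \to -\infty\), i.e., \(z_c(tx) - z_{c'}(tx) \to +\infty\), which is the desired conclusion.

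There is essentially no obstacle here: the argument is a direct unpacking of the softmax. The only point that requires a sentence of care is the passage from "the sum of non-negative terms vanishes" to "each term vanishes", which relies on the sum being finite (guaranteed since there are only \(k\) classes) and on each summand being non-negative (guaranteed since \(\exp > 0\)). No continuity or piecewise-affine structure of \(f_\theta\) is needed for this lemma; those properties will only be used later when one wants to exhibit, rather than merely characterize, arbitrarily high confidence.
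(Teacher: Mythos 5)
Your proof is correct and follows essentially the same route as the paper's: both reduce $P(y=c\mid tx)\to 1$ to the statement that $\sum_{c'\neq c}\exp\bigl(z_{c'}(tx)-z_c(tx)\bigr)\to 0$, then use non-negativity of each term to conclude that every pairwise difference $z_c(tx)-z_{c'}(tx)$ diverges to $+\infty$. The only difference is cosmetic (you divide through by $\exp(z_c(tx))$ whereas the paper inverts the softmax ratio), and your explicit remark about finiteness and positivity of the summands is exactly the justification the paper uses implicitly.
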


\begin{proof}
    From \eqref{eq:softmax}, if $\lim_{t\rightarrow\infty} P(y=c|tx) = 1$, then we have:
    \begin{align*}
    \lim_{t\rightarrow\infty} \frac{\exp(z_c(tx))}{\sum_{c'} \exp(z_{c'}(tx))} = 1.
    \end{align*}
    Therefore,
    \begin{align*}
    & \lim_{t\rightarrow\infty} \frac{\exp(z_c(tx))}{\sum_{c'} \exp(z_{c'}(tx))} = 1 \\
    \implies & \lim_{t\rightarrow\infty} \frac{\sum_{c'} \exp(z_{c'}(tx))}{\exp(z_c(tx))} = 1 \\
    \implies & \lim_{t\rightarrow\infty} 1+\sum_{c'\neq c}\frac{\exp(z_{c'}(tx))}{\exp(z_c(tx))} = 1 \\
    \implies & \lim_{t\rightarrow\infty} \sum_{c'\neq c}\exp(z_{c'}(tx)-z_c(tx)) = 0 \\
    \implies & \lim_{t\rightarrow\infty} \exp(z_{c'}(tx)-z_c(tx)) = 0 \quad \forall c'\neq c .
    \end{align*}
    Thus, we can conclude that \(\lim_{t\rightarrow\infty} z_c(tx) - z_{c'}(tx) = +\infty\) for all \(c' \neq c\).
\end{proof}

An immediate consequence of the above lemma is that networks with normalization such as layernorm do not  suffer from far-away arbitrarily high confidence since the layers that follow layernorm (including the logits) will remain bounded.  Note that networks with batchnorm may still exhibit far-away arbitrarily high confidence since batchnorm ensures that the logits of the training set are bounded, but not necessarily the logits of the test set, which may include OOD data that could be arbitrarily far.

Our solution consists of creating an additional, $(k+1)$-st class such that the confidence in the original classes vanishes far away from the training data while the confidence in the $(k+1)$-st class becomes arbitrarily high.
Note that this is desirable since the extra class represents the OOD class.
Based on Lemma~\ref{lemma:logits}, we achieve this by making sure that the corresponding logit $z_{k+1}$ is infinitely higher than the logits $z_{c\in\{1,...,k\}}$ of the other classes far away from the training data.  More precisely, let
\begin{equation}
    \label{eq:none-logit}
    z_{k+1}(x)=w_{k+1}^\top G_\psi(x)^2 + b_{k+1}
\end{equation}
where the weights $w_{k+1} \in \R^d_{> 0}$ are restricted within the positive orthant and $G_\psi(x)^2$ is the component-wise square of the network embedding \(G_\psi(x)\).

Then, given these logits, classification is performed as follows:
\begin{equation}
    \label{eq:c}
    P(y=c|x) = \frac{1}{A(x)} \exp(w_c^\top G_\psi(x) + b_c)
\end{equation}
for \(c \in \{ 1, \dots, k \}\), and
\begin{equation}
    \label{eq:none}
    P(y=k+1|x) = \frac{1}{A(x)} \exp(w_{k+1}^\top G_\psi(x)^2 + b_{k+1}) ,
\end{equation}
where
\begin{equation}
    \begin{aligned}
        A(x) := \sum_{c'=1}^k &\exp(w_{c'}^\top G_\psi(x) +  b_c) \\
            &+ \exp(w_{k+1}^\top G_\psi(x)^2 + b_{k+1})
    \end{aligned}
\end{equation}
is the softmax's denominator.

Intuitively, as we move away from the training data the magnitude of $G_\psi(x)$ may also increase, which may result in some class $c$ dominating with arbitrarily high confidence \citep{hein2019relu}.
However, by using $G_\psi(x)^2$ in the logit of the $(k+1)$-st class we make sure that it grows faster than other logits and therefore, eventually dominates. In Theorem~\ref{thm:no-high-confidence}, we prove that any classification network augmented with such a construction never exhibits far-away arbitrarily high confidence in classes $\{1,...,k\}$. Note that the theorem holds if the exponent of the term $G_\psi(x)$ in \eqref{eq:none-logit} is replaced with another even integer greater than 2. 

\begin{theorem}
    \label{thm:no-high-confidence}
    Let $G_\psi$ be any neural network embedding used for classification according to \eqref{eq:c} and \eqref{eq:none}. Let $w_c$ and $b_c$ be finite weights and biases in the penultimate classification layer for each class $c$.
    Let $tx_* \in \R^n$ be a test input with magnitude regulated by $t$.
    Then $\lim_{t\rightarrow\infty}P(y=c|tx_*) < 1$ for all $c \neq k+1$.
\end{theorem}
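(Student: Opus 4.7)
The plan is to invoke the contrapositive of Lemma~\ref{lemma:logits}: it suffices to show that for every $c \in \{1, \ldots, k\}$ the logit difference $z_c(tx_*) - z_{k+1}(tx_*)$ does \emph{not} tend to $+\infty$. The key tool is the piecewise affine structure of $G_\psi$ (inherited from the ReLU/piecewise-affine assumption on the base architecture): since $\R^n$ is covered by finitely many polytopes on each of which $G_\psi$ is affine, the ray $\{tx_* : t \geq 0\}$ eventually enters a single unbounded polytope. On that polytope, I can write $G_\psi(tx_*) = t u + v$ for all sufficiently large $t$, where $u := Mx_*$ and $v$ come from the affine representation of $G_\psi$ there.

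Substituting into the logit formulas yields $z_c(tx_*) = t(w_c^\top u) + (w_c^\top v + b_c)$, which is \emph{linear} in $t$, whereas componentwise expansion of $(tu + v)^2$ gives
\begin{equation*}
z_{k+1}(tx_*) = t^2\, w_{k+1}^\top u^2 + 2t\, w_{k+1}^\top (u \odot v) + w_{k+1}^\top v^2 + b_{k+1},
\end{equation*}
which is \emph{quadratic} in $t$. If $u \neq 0$, the strict positivity of the weights $w_{k+1} \in \R^d_{>0}$ combined with the fact that $u^2$ has nonnegative entries with at least one strictly positive, forces $w_{k+1}^\top u^2 > 0$. So $z_{k+1}(tx_*)$ grows quadratically while each $z_c(tx_*)$ grows at most linearly, giving $z_{k+1}(tx_*) - z_c(tx_*) \to +\infty$. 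Reading off \eqref{eq:c} and \eqref{eq:none}, the $(k+1)$-st term dominates the denominator $A(tx_*)$ and forces $P(y=c|tx_*) \to 0 < 1$ for every $c \neq k+1$.

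The remaining case is $u = 0$: here $G_\psi(tx_*) = v$ is \emph{constant} for large $t$, so all logits are constant in $t$ and $P(y=c|tx_*)$ is a fixed probability vector independent of $t$. Because the logits $w_c^\top v + b_c$ and $w_{k+1}^\top v^2 + b_{k+1}$ are all finite, the softmax assigns strictly positive mass to every class and therefore strictly less than $1$ to any individual class; in particular $\lim_{t\to\infty}P(y=c|tx_*) < 1$. Combining the two cases proves the theorem.

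The principal obstacle is really the $u = 0$ case rather than the $u \neq 0$ case: when the embedding fails to grow along the ray, the quadratic-domination argument collapses and the probability simply does not vanish, so the bound must come from the intrinsic strict inequality of softmax on finite logits. A secondary subtlety, more bookkeeping than substance, is formally justifying that the ray $\{tx_* : t \geq T_0\}$ lies in a single polytope for some $T_0$; this follows because the polytope partition of $\R^n$ is finite, so the ray can cross polytope boundaries only finitely many times before settling into an unbounded cell.
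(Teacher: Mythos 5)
Your argument is internally sound, but it proves a narrower statement than the theorem claims. The theorem is asserted for \emph{any} neural network embedding $G_\psi$ used with the logits \eqref{eq:c} and \eqref{eq:none}; the paper deliberately reserves the piecewise-affine/ReLU structure (via \cref{lemma:hein}) for the stronger \cref{thm:vanishing-confidence}. Your proof's first move---``the ray $\{tx_*\}$ eventually lies in a single polytope, so $G_\psi(tx_*)=tu+v$''---is only available when $G_\psi$ is piecewise affine, so the argument fails for, e.g., tanh, GELU, or attention-based embeddings, which are exactly the extra generality this theorem (as opposed to \cref{thm:vanishing-confidence}) is meant to cover. The paper's own proof needs no such structure: assuming via Lemma~\ref{lemma:logits} that some $c\neq k+1$ satisfies $z_c(tx)-z_{c'}(tx)\to+\infty$ for all $c'\neq c$, it splits on whether $z_c(tx)\to\infty$ or stays bounded. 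If bounded, the contradiction is immediate because $z_{k+1}(x)\geq b_{k+1}>-\infty$ always, so $z_c-z_{k+1}$ cannot diverge. If $z_c(tx)\to\infty$, then $\Vert G_\psi(tx)\Vert\to\infty$ (finite $w_c,b_c$), and since $z_{k+1}(x)\geq \bigl(\min_i (w_{k+1})_i\bigr)\Vert G_\psi(x)\Vert^2+b_{k+1}$ while $z_c(x)\leq \Vert w_c\Vert\,\Vert G_\psi(x)\Vert+b_c$, the quadratic-versus-linear comparison in $\Vert G_\psi\Vert$ (not in $t$) shows $z_c-z_{k+1}$ is bounded above, again a contradiction. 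To repair your proof to the stated generality, replace the linear-region step with this norm-based comparison, which uses only the positivity of $w_{k+1}$ and the componentwise square in \eqref{eq:none-logit}.

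Within the ReLU setting your route is correct and in fact buys more: in the $u\neq 0$ case you obtain $P(y=c\,|\,tx_*)\to 0$ and $P(y=k+1\,|\,tx_*)\to 1$, which is essentially the content of \cref{thm:vanishing-confidence}, and your $u=0$ case cleanly handles the degenerate rays that \cref{thm:vanishing-confidence} excludes by assumption (there the softmax is eventually constant with all classes receiving positive mass, so no class reaches confidence $1$). So the proposal is best viewed as a proof of the ReLU specialization (plus a strengthening), not of \cref{thm:no-high-confidence} as stated.
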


\begin{proof}
    Based on Lemma~\ref{lemma:logits}, arbitrarily high confidence (i.e., $\lim_{t\rightarrow\infty} P(y=c|tx) = 1$) arises when there is a $c$ such that $\lim_{t\rightarrow\infty} z_c(tx) - z_{c'}(tx) = \infty $ $\forall c'\neq c$.
    We prove by contradiction that this cannot happen once we introduce the extra class with its logit as defined in \eqref{eq:none-logit}.  Consider two cases:
    \begin{enumerate}
        \item Suppose that there exists a class $c\neq k+1$ such that $\lim_{t\rightarrow\infty}z_c(tx)=\infty$ and for all $c'\neq c$, $\lim_{t\rightarrow\infty} z_c(tx) - z_{c'}(tx)=\infty$.  Since $z_c(tx) = w_c^\top G_\psi(x) + b_c$, and the weights and biases are finite, then $\lim_{t\rightarrow\infty} z_c(tx)=\infty$ implies that $\lim_{t\rightarrow\infty} \Vert G_\psi(tx) \Vert =\infty$.  Since $z_{k+1}(tx)=w_{k+1}^\top G_\psi(x)^2 + b_{k+1}$ where $w_{k+1}\in \R^d_{> 0}$, i.e.\ each component of $w_{k+1}$ is positive, and $G_\psi(x)^2$ is always component-wise positive, then $\lim_{t\rightarrow\infty} z_{k+1}(tx)=\infty$ and $\lim_{t\rightarrow\infty} z_{k+1}(tx)> \lim_{t\rightarrow\infty} z_c(tx)$, which contradicts the assumption that $\lim_{t\rightarrow\infty} z_c(tx) - z_{k+1}(tx)=\infty$.
        \item Suppose that there exists a class $c\neq k+1$ such that $\lim_{t\rightarrow\infty} z_c(tx)<\infty$ and for all \(c'\neq c\), $\lim_{t\rightarrow\infty} z_c(tx) - z_{c'}(tx)=\infty$. Since $z_{k+1}=w_{k+1}^\top G_\psi(x)^2+b_{k+1}$, $w_{k+1}\in \R^d_{>0}$, $G_\psi(x)^2>0$ and $b_{k+1}<\infty$, then $\lim_{t\rightarrow\infty} z_{k+1}(tx) >-\infty$, which contradicts the assumption that $\lim_{t\rightarrow\infty} z_c(tx)-z_{k+1}(tx)=\infty$.
    \end{enumerate}
    Altogether, they imply the desired result.
\end{proof}

The above theorem guarantees that arbitrarily high confidence will not occur for any neural network with an extra class that we propose.  In addition, we show a stronger result in \cref{thm:vanishing-confidence} for ReLU classification networks.  As we move far away from the training data, we show that the confidence in the original classes (i.e., $c\in\{1,...,k\}$) will be dominated by the extra class. To prove this, we first recall an important lemma from \cite{hein2019relu} about ReLU networks.

\begin{lemma}[\citeauthor{hein2019relu}, \citeyear{hein2019relu}] \label{lemma:hein}
   Let $\{Q_r\}_{r=1}^R$ with \(\R^n = \cup_{r = 1}^R Q_r\) be a set of linear regions associated with a ReLU network $G_\psi:\R^n \to \R^d$. For any $x \in \R^n$ there exists an $\alpha \in \R_{>0}$ and $r \in \{1,\cdots,R \}$ such that for all \(t \geq \alpha\), we have $t x \in Q_r$. \qed
\end{lemma}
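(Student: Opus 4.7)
The plan is to exploit the convexity of each polytope $Q_r$ together with the finiteness of the collection $\{Q_r\}_{r=1}^R$. Intuitively, a ray from the origin through $x$ can only leave and re-enter a convex region once (never), so it visits each polytope in at most one contiguous segment of $t$-values, and since there are only finitely many polytopes at least one of these segments must be an unbounded interval $[\alpha, \infty)$.

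Concretely, first I would dispose of the trivial case $x = \vzero$, where $tx = \vzero$ lies in some fixed polytope for every $t$. Then, assuming $x \neq \vzero$, I would define for each $r \in \{1, \dots, R\}$ the set
\begin{equation*}
T_r \;:=\; \{\, t \geq 0 \,:\, tx \in Q_r \,\}.
\end{equation*}
The key observation is that $T_r$ is convex: if $t_1, t_2 \in T_r$ and $\lambda \in [0,1]$, then $(\lambda t_1 + (1-\lambda) t_2) x = \lambda (t_1 x) + (1-\lambda)(t_2 x)$ is a convex combination of points in the polytope $Q_r$, hence lies in $Q_r$ by convexity. So each $T_r$ is a convex subset of $[0,\infty)$, i.e.\ an interval (closed, since polytopes are closed).

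Next, because $\R^n = \bigcup_{r=1}^R Q_r$, every $t \geq 0$ satisfies $tx \in Q_r$ for some $r$, so $\bigcup_{r=1}^R T_r = [0,\infty)$. A finite union of bounded intervals cannot cover the unbounded set $[0,\infty)$, so there exists some index $r^*$ for which $T_{r^*}$ is unbounded above. Being a closed convex subinterval of $[0,\infty)$ that is unbounded above, $T_{r^*}$ must be of the form $[\alpha, \infty)$ for some $\alpha \geq 0$; replacing $\alpha$ by any larger positive number if $\alpha = 0$, we obtain $\alpha \in \R_{>0}$ and $tx \in Q_{r^*}$ for all $t \geq \alpha$, as required.

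The main conceptual step is realizing that convexity of the polytopes forces the intersection with a ray to be a single interval; once that is in hand the pigeonhole argument on the finite collection is routine. A minor technical subtlety to keep in mind is that polytopes in a ReLU partition typically share boundary faces, but since each $T_r$ is closed and the finite-union-covers-$[0,\infty)$ argument does not require disjointness, this causes no trouble.
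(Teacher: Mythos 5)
The paper does not actually prove this lemma---it is stated with a \qed as an imported result from \citet{hein2019relu}---and your argument is essentially the standard proof from that reference: convexity of each polytope $Q_r$ makes each $T_r=\{t\ge 0: tx\in Q_r\}$ an interval along the ray, and a finite family of intervals covering $[0,\infty)$ must contain one unbounded above, hence a terminal ray $[\alpha,\infty)$. Your proof is correct (including the $x=\vzero$ case and the shared-boundary remark); note only that closedness of $T_{r^*}$ is not even needed, since a convex subset of $[0,\infty)$ that is unbounded above already contains $[a,\infty)$ for any point $a$ in it.
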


This lemma tells us that as we move far away from the data region via scaling an input \(x \in \R^n\) with a nonnegative scalar, at some point we can represent the ReLU network with just an affine function.
It follows that in this case, increasing the scaling factor makes the magnitude of the network's output larger.
We will use this fact in our main theoretical result.

\begin{theorem} \label{thm:vanishing-confidence}
    Let $G_\psi(x)$ be a ReLU network embedding used for classification according to \eqref{eq:c} and \eqref{eq:none} with a piecewise affine representation \(G_\psi \vert_{Q_r} (x) = V_\psi^rx + a_{\psi}^r\) on the linear regions \(\{ Q_r \}_{r=1}^R\),  where $V_{\psi}^r \in \mathbb{R}^{d \times n}$ and $a_{\psi}^r \in \mathbb{R}^d$.
    Suppose \(V_\psi^rx\) does not contain identical rows for all \(r = 1, \dots, R\).
    Then for almost any input \(x_* \in \R^n\), we have \(\lim_{t \to \infty} \argmax_{c=1, \dots, k+1} P(y = c | t x_*) = k+1\).
\end{theorem}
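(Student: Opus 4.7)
The plan is to reduce the asymptotic behavior along the ray $\{t x_* : t \geq 0\}$ to a single affine piece of the network via Lemma~\ref{lemma:hein}, then compare the growth rates of the ordinary logits (linear in $t$) against the augmented logit $z_{k+1}$ (quadratic in $t$), and finally argue that the coefficient of $t^2$ is strictly positive for almost every $x_*$.

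First, I would fix $x_* \in \R^n$ and invoke Lemma~\ref{lemma:hein} to obtain $\alpha > 0$ and an index $r$ with $tx_* \in Q_r$ for all $t \geq \alpha$. On that tail of the ray we have $G_\psi(tx_*) = t V_\psi^r x_* + a_\psi^r$, so for each $c \in \{1,\dots,k\}$,
\begin{equation*}
z_c(tx_*) = t\,(w_c^\top V_\psi^r x_*) + (w_c^\top a_\psi^r + b_c),
\end{equation*}
which is affine in $t$. For the extra class, because the square is component-wise,
\begin{equation*}
z_{k+1}(tx_*) = t^2 \sum_{i=1}^d w_{k+1,i}(V_\psi^r x_*)_i^2 + t\,\gamma_r(x_*) + \delta_r(x_*),
\end{equation*}
for some finite coefficients $\gamma_r(x_*), \delta_r(x_*)$ that I would not need to track explicitly.

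The crux is showing that the leading coefficient $\kappa_r(x_*) := \sum_i w_{k+1,i} (V_\psi^r x_*)_i^2$ is strictly positive for almost every $x_*$. Since $w_{k+1} \in \R^d_{>0}$ componentwise, $\kappa_r(x_*) > 0$ holds exactly when $V_\psi^r x_* \neq 0$. The hypothesis that the rows of $V_\psi^r x$ are not all identical rules out $V_\psi^r = 0$ (an all-zero matrix would have identical rows), so $\ker V_\psi^r$ is a proper linear subspace of $\R^n$ and hence has Lebesgue measure zero. Taking the union over the finitely many regions $r=1,\dots,R$ still yields a null set $\mathcal{N}$. For any $x_* \notin \mathcal{N}$, the associated region $r(x_*)$ supplied by Lemma~\ref{lemma:hein} satisfies $\kappa_{r(x_*)}(x_*) > 0$.

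Putting the pieces together: for $x_* \notin \mathcal{N}$ and every $c \in \{1,\dots,k\}$,
\begin{equation*}
z_{k+1}(tx_*) - z_c(tx_*) = \kappa_{r(x_*)}(x_*)\, t^2 + O(t) \xrightarrow[t \to \infty]{} +\infty,
\end{equation*}
so $z_{k+1}(tx_*)$ eventually strictly exceeds every other logit, forcing $\argmax_c P(y=c \mid tx_*) = k+1$ for all sufficiently large $t$, which gives the claimed limit. The main obstacle I anticipate is precisely the almost-everywhere step: one has to be careful that the exceptional set is the union over the finite collection of regions (not a single one), and that the structural assumption on the rows of $V_\psi^r x$ is strong enough to rule out $V_\psi^r = 0$ in every region so that each $\ker V_\psi^r$ is measure zero; the remainder of the argument is then a direct comparison of polynomial growth rates in $t$.
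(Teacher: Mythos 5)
Your proof is correct and follows essentially the same route as the paper: reduce to a single affine piece along the ray via Lemma~\ref{lemma:hein}, then note that $z_{k+1}$ grows quadratically in $t$ while the original logits grow only linearly, so class $k+1$ eventually dominates. If anything, you are more explicit than the paper about the ``almost any $x_*$'' step, identifying the exceptional set as the finite union of the kernels $\ker V_\psi^r$ (each measure zero since the row assumption rules out $V_\psi^r = 0$), whereas the paper simply asserts $\Vert G_\psi(t x_*)\Vert \to \infty$ and argues via the softmax probability of class $k+1$ tending to one.
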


\begin{proof}
    First, since the coefficients $w_{k+1} \in \R^d_{>0}$ are constrained to be component-wise positive, the logit $w_{k+1}^{\top} G_\psi(x_*)^2$ of the additional \((k+1)\)-st class is always positive.
    Second, by \cref{lemma:hein}, there exists \(\alpha > 0\) s.t.\ for all \(t \geq \alpha\), the ReLU network is represented by a single affine function $G_\psi(t x_*) = V_\psi t x_* + a_\psi$.
    Therefore, as $t \to \infty$, the norm $\Vert G_\psi(t x_*)^2 \Vert$ of $G_\psi(t x_*)^2 = (t V_\psi x_* + a_\psi)^2$ also tends to infinity (recall that we use the notation \((\cdot)^2\) on vectors as component-wise square).

    Now, notice that we can write $P(y=k+1 | x=t x_*)$ as:
    %
        \begin{align}
    \frac{e^{w_{{k+1}}^{\top} G_\psi(t x_*)^2 +b_{k+1}} }{\sum_{c'=1}^ke^{w_{c'}^\top G_\psi(t x_*)+b_{c'}}+ e^{w_{{k+1}}^{\top} G_\psi(t x_*)^2 +b_{k+1}} } 
        \end{align}
which is equal to:
\begin{align}
\frac{1}{1+\sum_{c'=1}^k e^{w_{c'}^\top G_\psi(t x_*) + b_{c'} -w_{k+1}^\top G_\psi(t x_*)^2 - b_{k+1} } }
\end{align}

    %
    Recall that \(\lim_{t \to \infty} \Vert G_\psi(t x_*) \Vert \to \infty\).
    Moreover, \(\Vert G_\psi(t x_*)^2 \Vert\) grows even faster.
    So, as \(t \to \infty\)  we can see from the expression above that \(P(y=k+1 | x=t x_*) = \frac{1}{1+k \exp(-\infty)} = 1\).
    This immediately implies that the class \(k+1\) achieves the maximum softmax probability since probability vectors sum to one.
    Moreover, the in-distribution classes \(\{ 1, \dots, k \}\) have the probabilty zero.
\end{proof}

Figure~\ref{fig:method} shows the effect of our method on the prediction and confidence of a neural network classifier on a one-dimensional binary classification toy dataset. We can observe that the standard logits keep on increasing as we move away from the data. Therein we train an extra class using uniform noise and observe that as we move away from the training data the logits of the extra class dominate. This `fixes' the neural network prediction and confidence away from the training data as far-away inputs are predicted as the extra class.

In order to train the extra class we rely on an auxiliary OOD dataset like previous methods~\citep{hendrycks2018deep,meinke2020towards}. Such methods tend to demonstrate strong performance on OOD detection on standard benchmarks. Our overall training objective is as follows:
\begin{align*}
    \label{eq:loss}
    \begin{split}
     \mathcal{R}(f_{\psi,W}(x)) & ={\E}_{(x,y)\sim \D_\text{in}} \L_\text{CE}(f_{\psi,w_{k \in \{1,\cdots, K\}}}(x),y) \\ & + \lambda {\E}_{(x')\sim \D_\text{ood}} \L_\text{CE}(f_{\psi,w_{k+1}}(x'),k+1).
    \end{split}
\end{align*}

Here $\L_{CE}$ is the cross-entropy loss and $\lambda$ controls the relative weight between the loss on in-domain and OOD training inputs. Algorithm~\ref{alg:method} shows the training procedure for PreLoad. Note that we have the option of either training our method from scratch or fine-tuning after a neural network has been trained on in-domain data, similar to \citet{hendrycks2018deep}.

\begin{algorithm}[t]
    \small

    \caption{PreLoad Algorithm}
    \label{alg:method}

    \begin{algorithmic}[1]
        \Require
            \Statex Training Set $\D_\text{in}:=\{(x_i \in \R^n, y_i \in \{1,
            \cdots, k\}) \}$
            \Statex OOD Training Set $\D_\text{ood}:=\{(x'_i \in \R^n)\}$
            \Statex Neural network $f_\theta$ with $\theta=\{\psi,W\}$, number of iterations \(T\), learning rate $\eta$
        \vspace{1em}

        \For {$i\leftarrow 1$ \textbf{to} $T$}
            \State Sample a mini-batch S from $\D_{in}: S = \{x_i,y_i\}_{i=1}^m$
            \State Sample a mini-batch S' from $\D_{ood}: S' = \{x_{i}'\}_{i=1}^m$
            \State Compute the objective function $\mathcal{R}$ such that
            \State $ \mathcal{R}(f_{\psi,W}(x)) =\E_{S} \L_{CE}(f_{\psi,w_{k \in \{1,\cdots, K\}}}(x_i),y_i)$  \\ \hspace{2.5cm}
            $+ \lambda {\E}_{S'} \L_{CE}(f_{\psi,w_{k+1}}(x_i'),k+1)$ 
            \State Update the parameters $\theta_{t+1} = \theta_{t} - \eta \nabla_\theta \mathcal{R}(f_{\psi,W}(x))$
        \EndFor

        \State \textbf{Predict} OOD: $\mathrm{is\_ood} = (\argmax_c P(y=c|x_*) == k+1)$ for test sample $x_*$
    \end{algorithmic}
\end{algorithm}

\section{RELATED WORKS}
\label{sec:related}

\paragraph{Gaussian Assumption.} Some recent works on OOD detection have assumed that the embedding, $G_\psi(x)$, produced by the penultimate layer of a neural network is Gaussian, and have built algorithms based on this. \cite{lee2018simple} propose fitting class conditional Gaussians on $G(x)$ such that $p(G_{\theta}(x)|y=c) = \mathcal{N}(G_{\theta}(x|{\mu}_c, \Sigma)$ where $\Sigma$ is a diagonal covariance matrix. The mean and covariance are empirically estimated from the  class-wise neural network embeddings of the training data. The method computes a confidence score, for a test sample $x_i$, based on the Mahalanobis distance from the class-conditional Gaussians. \cite{mukhoti2023deep} go further by fitting a Gaussian Mixture Model (GMM) on $G_\psi(x)$. Thereafter, they use the GMM density as the OOD score for a test sample $x_i$.

Even though these methods are deterministic and prevent arbitrarily high confidence on far-away data, they assume that $G_\psi(x)$ follows a Gaussian or mixture of Gaussian distribution. Moreover, they need to adjust a confidence threshold for each dataset and require an additional step beyond standard discriminative training to fit the Gaussian or mixture of Gaussians.

\textbf{OOD training.} \cite{zhang2017universum} presented the concept of a ``None'' class or an additional class for a supervised learning problem, which, is trained on unlabeled data for regularization of a DNN to improve generalization. \cite{kristiadi2022being} adapted this method to OOD detection such that they train an additional output of a neural network to predict a ``None'' class. The linear layer weights corresponding to the ``None'' class, $w_{k+1}$ are trained on an additional OOD data set which is carefully selected to remove any overlap with the training set. Even though using an OOD set may not be ideal, ~\cite{kristiadi2022being} demonstrated that these methods show state-of-the-art performance. \cite{hein2019relu} showed that theoretically, ``None'' class methods are prone to arbitrarily high confidence on far-away data.

Outlier Exposure \citep[OE,][]{hendrycks2018deep} also relies on OOD data, but, instead of learning an extra class, trains the class probabilities, $P(y|x)$ to output a uniform distribution when the data is OOD. Distributional-agnostic Outlier Exposure \citep[DOE,][]{wang2023outofdistribution}, is a variant of OE, which, uses model perturbation to generate ``worst-case'' OOD data and applies the OE algorithm on these data. We will demonstrate in the results that these methods also fail in the presence of far-away data. \cite{meinke2020towards} present an algorithm that models a joint probability distribution, $P(x,y)$ over both the in-distribution and OOD data. Using this, they jointly train a neural network that models the predictive distribution and two GMMs that model the generative distribution for in-domain and OOD data. Similar to OE, the neural network is trained to output uniform probabilities for OOD data. This algorithm has some provable guarantees on far-away OOD detection and reaches close to the performance of OE on standard benchmarks. Our algorithm, however, can achieve that without any generative modeling on either in-domain or OOD data.

An alternative to relying on the softmax for confidence is using an energy function. \cite{liu2020energy} propose a fine-tuning algorithm that combines an energy-based loss function with the standard cross-entropy loss. This additional loss uses two additional margin hyper-parameters, $\{m_{in},m_{ood}\}$, and penalizes in-domain samples which produce energy higher than $m_{in}$ and OOD samples which produce energy lower than $m_{ood}$. They only rely on discriminative training but do not prevent arbitrarily high confidence on far-away data, which, we will demonstrate in the results.





\vspace{-0.2em}
\section{EXPERIMENTS}
\vspace{-0.2em}
\label{sec:experiments}
\begin{table*}[t]
  \caption{
    OOD data detection using the FPR-$95$ metric when the OOD data is far away from the training data. We present the average result of $5$ runs with error bars. Lower numbers are better.
  }
  \label{tab:faraway-fpr95}

  \vspace{0.5em}

  \scriptsize

  \setlength{\aboverulesep}{0pt}
  \setlength{\belowrulesep}{0pt}
  \setlength{\extrarowheight}{.75ex}

    \begin{tabular*}{\linewidth}{l@{\extracolsep{\fill}} c c c c  >{\columncolor{black!20}}c || c c c >{\columncolor{black!20}}c }

    \toprule

    \multirow{2}{*}{\textbf{Dataset}} & \multicolumn{5}{c}{\textbf{Trained from Scratch}} & \multicolumn{4}{c}{\textbf{Finetuned}} \\

    \cmidrule(l){2-6} \cmidrule(l){7-10}

     & \textbf{Standard} & \textbf{DDU} & \textbf{NC} & \textbf{OE} & \textbf{PreLoad} & \textbf{OE-FT} & \textbf{DOE-FT} & \textbf{Energy-FT} & \textbf{PreLoad-FT} \\

    \midrule

    \textbf{MNIST} & & & & & & & & &  \\
    FarAway & 100.0$\pm$0.0 & \textbf{0.0$\pm$0.0} & \textbf{0.0$\pm$0.0} & 56.6$\pm$19.6 & \textbf{0.0$\pm$0.0} & 99.0$\pm$0.4 & 56.8$\pm$18.1 & 100.0$\pm$0.0 & \textbf{0.0$\pm$0.0} \\
    FarAway-RD & 99.9$\pm$0.0 & \textbf{0.0$\pm$0.0} & 99.9$\pm$0.1 & 99.8$\pm$0.0 & \textbf{0.0$\pm$0.0} & 99.5$\pm$0.1 & 99.8$\pm$0.1 & 100.0$\pm$0.0 & \textbf{0.0$\pm$0.0} \\

     \midrule

    \textbf{F-MNIST} & & & & & & & & &  \\
    FarAway & 100.0$\pm$0.0 & \textbf{0.0$\pm$0.0} & 53.5$\pm$22.5 & 100.0$\pm$0.0 & \textbf{0.0$\pm$0.0} & 100.0$\pm$0.0 & 99.6$\pm$0.4 & 38.4$\pm$8.9 & \textbf{0.0$\pm$0.0} \\
    FarAway-RD & 100.0$\pm$0.0 & \textbf{0.0$\pm$0.0} & 100.0$\pm$0.0 & 100.0$\pm$0.0 & \textbf{0.0$\pm$0.0} & 100.0$\pm$0.0 & 100.0$\pm$0.0 & 81.6$\pm$8.8 & \textbf{0.0$\pm$0.0} \\

    \midrule

    \textbf{SVHN} & & & & & & & & &  \\
            FarAway & 99.4$\pm$0.2 & \textbf{0.0$\pm$0.0} & 80.0$\pm$20.0 & 99.4$\pm$0.4 & \textbf{0.0$\pm$0.0} & 99.3$\pm$0.3 & 99.9$\pm$0.1 & 100.0$\pm$0.0 & \textbf{0.0$\pm$0.0} \\
    FarAway-RD & 99.8$\pm$0.1 & \textbf{0.0$\pm$0.0} & 80.0$\pm$20.0 & 85.4$\pm$7.6 & \textbf{0.0$\pm$0.0} & 93.1$\pm$2.5 & 99.3$\pm$0.6 & 100.0$\pm$0.0 & \textbf{0.0$\pm$0.0} \\

    \midrule

    \textbf{CIFAR-10} & & & & & & & & &  \\
    FarAway & 100.0$\pm$0.0 & \textbf{0.0$\pm$0.0} & 20.0$\pm$20.0 & 100.0$\pm$0.0 & \textbf{0.0$\pm$0.0} & 100.0$\pm$0.0 &100$\pm$0.0 & 100.0$\pm$0.0 & \textbf{0.0$\pm$0.0} \\      
    FarAway-RD & 99.7$\pm$0.2 & \textbf{0.0$\pm$0.0} & 40.0$\pm$24.5 & 100.0$\pm$0.0 & \textbf{0.0$\pm$0.0} & 99.5$\pm$0.3 & 99.6$\pm$0.4 & 100.0$\pm$0.0 & \textbf{0.0$\pm$0.0} \\

    \midrule

    \textbf{CIFAR-100} & & & & & & & & &  \\
    FarAway & 100.0$\pm$0.0 & \textbf{0.0$\pm$0.0} & 20.0$\pm$20.0 & 100.0$\pm$0.0 & \textbf{0.0$\pm$0.0} & 100.0$\pm$0.0 & 100.0$\pm$0.0 & 100.0$\pm$0.0 & \textbf{0.0$\pm$0.0} \\
    FarAway-RD & 100.0$\pm$0.0 & \textbf{0.0$\pm$0.0} & 20.0$\pm$20.0 & 100.0$\pm$0.0 & \textbf{0.0$\pm$0.0} & 100.0$\pm$0.0 & 100.0$\pm$0.0 & 100.0$\pm$0.0 & \textbf{0.0$\pm$0.0} \\

    \bottomrule

    \end{tabular*}
\end{table*}

\begin{table*}
  \caption{
    OOD detection results on image classification data reporting the FPR-$95$ metric. The results are averaged over $6$ OOD test sets and five runs for each instance. Lower numbers are better.
  }
  \label{tab:fpr_avg}

  \vspace{0.5em}

  \scriptsize
  \setlength{\aboverulesep}{0pt}
  \setlength{\belowrulesep}{0pt}
  \setlength{\extrarowheight}{.75ex}

  \begin{tabular*}{\linewidth}{l@{\extracolsep{\fill}} c c c c >{\columncolor{black!20}}c || c c c >{\columncolor{black!20}}c }

    \toprule

    \multirow{2}{*}{\textbf{Dataset}} & \multicolumn{5}{c}{\textbf{Trained from Scratch}} & \multicolumn{4}{c}{\textbf{Finetuned}} \\

    \cmidrule(l){2-6} \cmidrule(l){7-10}

     & \textbf{Standard} & \textbf{DDU} & \textbf{NC} & \textbf{OE} & \textbf{PreLoad} & \textbf{OE-FT} & \textbf{DOE-FT} & \textbf{Energy-FT} & \textbf{PreLoad-FT} \\

    \midrule

    {MNIST} & 10.9$\pm$2.3 & 47.7$\pm$6.9 & \textbf{3.3$\pm$1.2} & \textbf{5.5$\pm$1.9} & 6.6$\pm$2.0 & \textbf{4.7$\pm$1.7} & \textbf{4.3$\pm$1.6} & \textbf{8.4$\pm$2.4} & \textbf{8.4$\pm$2.5} \\
    {F-MNIST} & 70.6$\pm$4.0 & 35.1$\pm$7.7 & \textbf{2.2$\pm$0.5} & 31.7$\pm$5.5 & \textbf{2.3$\pm$0.6} & 31.7$\pm$5.9 & 20.7$\pm$3.9 & \textbf{14.5$\pm$2.9} & \textbf{12.4$\pm$2.3} \\
    {SVHN} & 23.7$\pm$1.3 & 8.0$\pm$1.6 & \textbf{2.1$\pm$0.9} & \textbf{1.7$\pm$0.7} & \textbf{1.1$\pm$0.5} & \textbf{1.6$\pm$0.6} & \textbf{1.3$\pm$0.5} & 7.2$\pm$0.9 & \textbf{0.8$\pm$0.3} \\
    {CIFAR10} & 51.1$\pm$3.6 & 38.0$\pm$4.8 & \textbf{5.7$\pm$1.7} & 11.7$\pm$2.1 & \textbf{6.0$\pm$1.8} & 20.0$\pm$2.8 & \textbf{15.1$\pm$2.5} & \textbf{15.6$\pm$3.6}& \textbf{12.0$\pm$2.5} \\
    {CIFAR100} & 77.2$\pm$1.9 & 66.0$\pm$6.4 & \textbf{27.5$\pm$5.2} & 60.2$\pm$4.1 & \textbf{25.9$\pm$4.8} & 70.6$\pm$2.5 & 54.3$\pm$4.4 & 49.4$\pm$4.6 & \textbf{39.5$\pm$5.2} \\
    \bottomrule
  \end{tabular*}

\end{table*}

We evaluate our algorithm, PreLoad,\footnote{\url{https://github.com/serenahacker/PreLoad}} in three ways. First, we evaluate on synthetic far-away data to validate our theoretical results. Then, we evaluate on standard benchmarks which measure OOD detection performance on realistic data. Finally, we evaluate the calibration of our model under dataset shift.

\textbf{Datasets.} Our in-domain datasets include MNIST~\citep{lecun1998gradient}, Fashion MNIST (FMNIST)~\citep{xiao2017fashion}, SVHN~\citep{netzer2011reading}, CIFAR10 and CIFAR100~\citep{krizhevsky2009learning}. We train LeNet for MNIST and FMNIST and WideResNet-16-4~\citep{zagoruyko2016wide} for SVHN, CIFAR10 and CIFAR100. The OOD training set for the methods which rely on OOD training, including ours, is $300,000$ random images  as released by \cite{hendrycks2018deep} as $80$ million tiny images~\citep{torralba200880} is no longer available.

\textbf{Metrics.} Following convention, we define an in-domain sample as positive and an OOD sample as negative. The true positive rate (TPR) is $\text{TPR} = \frac{\text{TP}}{\text{TP} + \text{FN}}$ and the false positive rate (FPR) is $\text{FPR }= \frac{\text{FP}}{\text{FP} + \text{TN}}$, where TP, FN, FP and TN are true positive, false negative, false positive and true negative respectively. We report our results on FPR-$95$ with further results on AUROC and calibration in Supplementary Section~\ref{sup:results}. FPR-${95}$ is the FPR when the TPR is $95$\%. The metric can be interpreted as the probability that a negative sample will be classified as positive when $95\%$ of samples are correctly classified as positive. A lower score is better.

\textbf{Baselines.} We compare PreLoad against baseline methods in two settings: trained from scratch and fine-tuned. In the former, we compare against a DNN trained on in-domain data, referred to as Standard, OOD training baselines including a ``None'' class method~\citep{kristiadi2022being} referred to as NC, Outlier Exposure~\citep{hendrycks2018deep} referred to as OE and a generative modeling baseline \citep{mukhoti2023deep} referred to as DDU. All the methods are developed starting from identical neural network architectures and we select optimal hyper-parameters for PreLoad based on maximizing in-distribution validation accuracy. Standard, OE, NC, and PreLoad are all trained for 100 epochs from scratch. DDU trains a Gaussian Mixture Model over the Standard method for OOD detection. Finetuned (FT) baselines include OE (OE-FT), DOE (DOE-FT)  and Energy-FT~\citep{liu2020energy}. PreLoad-FT and the FT baselines are initialized from a Standard model and are fine-tuned over 10 epochs using the respective losses. Training details can be found in Supplementary Section~\ref{sup:training}.

\begin{figure*}
  \centering
  \includegraphics[width=0.495\linewidth]{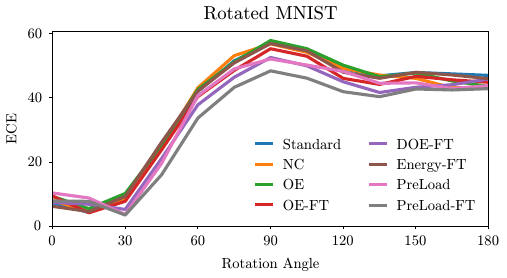}
  \includegraphics[width=0.495\linewidth]{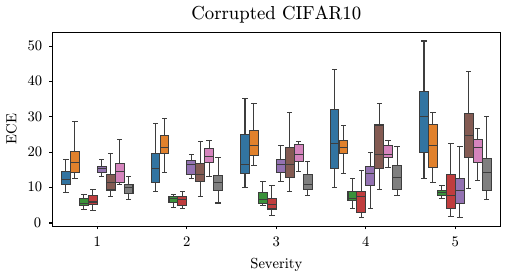}

  \caption{Calibration results, measured on the ECE metric, on Rotated MNIST and CIFAR10-C following ~\cite{ovadia2019can}.
  }
  \label{fig:shift_ece}
\end{figure*}

\subsection{Far-Away Data}

We present results on two types of far-away data: FarAway~\citep{hein2019relu} and FarAway Random Direction (FarAway-RD). A FarAway sample $s$ is defined as $s = t u_x$, where $u_x$ has the shape of a training sample $x$ and contains values sampled from a uniform distribution on the interval [0, 1) and $t$ is some constant. On the other hand, a FarAway-RD sample $s'$ can be defined as $s' = u_x + t v$, where $u_x$ and $t$ are as previously defined and $v$ is a sample from the unit sphere such that $\Vert v \Vert=1$. As the name suggests, Faraway-RD can scale the data in a random direction. For all the experiments we have fixed $t=10,000$.
Note that far-away data defined as such is unbounded, i.e., in $\R^n$, whereas realistic images are in $[0,1]^n$. In the subsequent section we present results on realistic images.


Table~\ref{tab:faraway-fpr95} shows the FPR-$95$ metric when we evaluate on the two types of far-away  data for models trained on each dataset. We observe that both versions of our method, PreLoad, and PreLoad-FT, achieve perfect FPR-$95$ of $0$ on all the datasets on both types of far-away data. The Standard method is the worst with OE and DOE-FT also doing poorly. Energy-FT, which incorporates an energy function into the loss also does not do well in this setting. NC performs better in some scenarios such as on FarAway on MNIST but generally has high variability between different runs. DDU, which trains a Gaussian Mixture Model on top of the neural network embedding, unsurprisingly, achieves perfect results as well.

\subsection{OOD Benchmarks}

Next, we present our results on standard OOD benchmarks which evaluate a more realistic scenario for the  evaluation of an image classifier. Models trained on MNIST and F-MNIST are evaluated on each other and E-MNIST~\citep{cohen2017emnist}, K-MNIST~\citep{clanuwat2018deep} and grey-scale CIFAR (CIFAR-Gr). Models trained on SVHN, CIFAR10 and CIFAR100 are evaluated on each other as well as LSUN classroom (LSUN-CR)~\citep{yu2015lsun} and Fashion MNIST 3D (FMNIST-3D). Additionally, all models are evaluated on uniform noise shaped like the relevant images and smooth noise~\citep{hein2019relu}, obtained by permuting, blurring and contrast-rescaling the original training data. Further information is provided in the Supplementary Section~\ref{sup:data}.

Table~\ref{tab:fpr_avg} presents the FPR-$95$ averaged over all the OOD evaluation sets with error bars. Detailed results are in Section~\ref{sup:results}. We have indicated in bold the best ``Trained from scratch'' and best ``Finetuned'' results in each row. Note that we take into account the error bars when highlighting the best results. 

When training from scratch, PreLoad along with NC performs the best on F-MNIST, SHVN, CIFAR10, and CIFAR100. On MNIST, the NC method and OE perform better. Note that DDU which can prevent arbitrarily high confidence on far-away data is always significantly worse than our method on realistic OOD data. In the FT setting, we observe that Energy-FT and DOE-FT are better than OE-FT, however, our FT method performs the best.

We note that extra class methods, such as NC and ours, use the confidence of the additional class to detect OOD, unlike other methods such as Standard or OE which use $\max P(y|x)$ amongst all the classes. Since the additional class is trained on OOD data, such methods tend to perform better on OOD detection.

\subsection{Dataset Shifts}

Once we have established that our method performs well on far-away and realistic OOD data, we evaluate model calibration under data shift. Calibration is an important measure of uncertainty quantification. We evaluate calibration using the ECE (confidence ECE following \citealt{guo2017calibration}) metric with $15$ bins. We use Rotated MNIST~\citep{ovadia2019can} and Corrupted CIFAR10 (CIFAR10-C)~\citep{hendrycks2018benchmarking} for evaluating on data shift.

We observe in Figure~\ref{fig:shift_ece} that on Rotated MNIST, as we increase the rotation angle, PreLoad-FT performs the best followed by DOE-FT and PreLoad. PreLoad-FT scores the lowest ECE when the angle moves beyond $30$. On CIFAR10-C we observe that as we increase corruption severity, ECE for Standard degrades the most followed by NC and Energy-FT. The OE and OE-FT methods perform the best followed by PreLoad-FT and DOE-FT. We observe that the FT methods do better than the methods trained from scratch.

\cite{kristiadi2022being} suggest that NC, which is an extra class method, may do worse on dataset shift as it uses the confidence of the additional class which is trained on OOD data. Corrupted data may resemble OOD and therefore the calibration would be off. Our method on the other hand demonstrates that carefully designed extra-class methods can be better calibrated under dataset shift.

\section{CONCLUSION}
\label{sec:conclusion}
In this work, we have presented PreLoad, an OOD detection method that provably fixes arbitrarily high confidence in neural networks on far-away data.
PreLoad works by training an extra class which produces larger logits as test samples move farther from the training data. Unlike all other baselines, PreLoad fulfills each of our three desiderata: a) maintain the simplicity of standard discriminative training b) provably fix arbitrarily high confidence on far-away data and c) perform well on realistic OOD samples. Future work could include training PreLoad with perturbed data such as adversarial examples, and adapting it to OOD detection in language models. 

\section*{Acknowledgements}

This research was partly funded by the Waterloo Apple PhD Fellowship, Natural Sciences and Engineering Council of Canada, and the David R. Cheriton Graduate Scholarship. Resources used in this work were provided by Huawei Canada, the Province of Ontario, the Government of Canada through CIFAR, companies sponsoring the Vector Institute \url{https://vectorinstitute.ai/partners/} and the Natural Sciences and Engineering Council of Canada.



\bibliography{aistats2024}
\bibliographystyle{revplainnat}

\section*{Checklist}



 \begin{enumerate}

 \item For all models and algorithms presented, check if you include:
 \begin{enumerate}
   \item A clear description of the mathematical setting, assumptions, algorithm, and/or model. [Yes]
   \item An analysis of the properties and complexity (time, space, sample size) of any algorithm. [Not Applicable]
   \item (Optional) Anonymized source code, with specification of all dependencies, including external libraries. [Yes]
 \end{enumerate}

 \item For any theoretical claim, check if you include:
 \begin{enumerate}
   \item Statements of the full set of assumptions of all theoretical results. [Yes]
   \item Complete proofs of all theoretical results. [Yes]
   \item Clear explanations of any assumptions. [Yes]
 \end{enumerate}

 \item For all figures and tables that present empirical results, check if you include:
 \begin{enumerate}
   \item The code, data, and instructions needed to reproduce the main experimental results (either in the supplemental material or as a URL). [Yes]
   \item All the training details (e.g., data splits, hyperparameters, how they were chosen). [Yes]
         \item A clear definition of the specific measure or statistics and error bars (e.g., with respect to the random seed after running experiments multiple times). [Yes]
         \item A description of the computing infrastructure used. (e.g., type of GPUs, internal cluster, or cloud provider). [Yes]
 \end{enumerate}

 \item If you are using existing assets (e.g., code, data, models) or curating/releasing new assets, check if you include:
 \begin{enumerate}
   \item Citations of the creator If your work uses existing assets. [Yes]
   \item The license information of the assets, if applicable. [Yes]
   \item New assets either in the supplemental material or as a URL, if applicable. [Not Applicable]
   \item Information about consent from data providers/curators. [Not Applicable]
   \item Discussion of sensible content if applicable, e.g., personally identifiable information or offensive content. [Not Applicable]
 \end{enumerate}

 \item If you used crowdsourcing or conducted research with human subjects, check if you include:
 \begin{enumerate}
   \item The full text of instructions given to participants and screenshots. [Not Applicable]
   \item Descriptions of potential participant risks, with links to Institutional Review Board (IRB) approvals if applicable. [Not Applicable]
   \item The estimated hourly wage paid to participants and the total amount spent on participant compensation. [Not Applicable]
 \end{enumerate}

 \end{enumerate}

 \newpage

\begin{appendices}
  \onecolumn
    \aistatstitle{Preventing Arbitrarily High Confidence on Far-Away Data in Point-Estimated Discriminative Neural Networks \\ Supplementary Materials}
    \thispagestyle{empty}

  \section{Experimental Details}
  \label{sec:sup}
  \subsection{Training Details}
\label{sup:training}

Our code is publicly available at: \url{https://github.com/serenahacker/PreLoad}.

We trained all models on a single 12GB-Tesla P-100 GPU. All results are averaged over 5 random seeds.
Models were either trained from scratch or fine-tuned.
When trained from scratch, all models were trained for 100 epochs and when fine-tuned, the Standard model was trained for 100 epochs with a further 10 epochs of fine-tuning. Note that OE-FT and Energy-FT do not introduce any new parameters to the Standard network, so all the parameters are sufficiently initialized during the 100 epochs of pre-training. On the contrary, PreLoad-FT introduces additional parameters, \(w_{k+1}\) and \(b_{k+1}\). In order to initialize them properly and maintain a fair comparison between the algorithms, we pre-train the new weights using the objective $\min_{w_{k+1},b_{k+1}}\mathcal{R}(f_{\psi,W}(x))$ for 10 epochs. All other parameters are frozen. After that, we fine-tune all the parameters $\theta$ using the objective  $\min_\theta \mathcal{R}(f_{\psi,W}(x))$ where $\theta = \{\psi, W \}$ as in \cref{alg:method}---note that \(W\) here also includes \(w_{k+1}\) and \(b_{k+1}\).

Note that in our implementation, we restrict the weights $w_{k+1}$ to the positive orthant by setting $w_{k+1} = \exp(w'_{k+1})$ for some weights $w'_{k+1}$.



In all experiments, we used a batch size of 128 and a Cosine Annealing Scheduler for the learning rate. We tuned some of the hyper-parameters for PreLoad, PreLoad-FT and Energy-FT using WandB\footnote{\url{https://github.com/wandb/wandb}} sweeps. Each sweep consisted of 50 runs. Optimal hyper-parameter values were selected based on the highest evaluation accuracy. Table~\ref{tab:param-tuning} lists our tuning strategy for each algorithm. Note that wd stands for weight decay, lr is learning rate, $m_\text{in}$ and $m_\text{ood}$ are the in-domian and OOD margin parameters for Energy-FT, and $\lambda$ is a constant that scales the OOD loss. Note the for Energy-FT the OOD loss has both an in-domain and an out-of-domain component. Also note that for DOE-FT, $\lambda$ controls the relative weight of the cross-entropy loss and the OE loss~\citep{wang2023outofdistribution}.

Tables~\ref{tab:param_mnist} to \ref{tab:param_cifar100} list the important hyper-parameters for all the methods for all the data sets. The values in bold were obtained after hyper-parameter tuning. For DOE-FT, we mostly used the same hyper-parameters as specified in ~\cite{wang2023outofdistribution} (i.e. $\beta = 0.6$, $\lambda = 1$, $\mathrm{warmup\ epochs} = 5$, $\mathrm{perturbation\ steps} = 1$). We changed $\alpha$ to be uniformly sampled from $\{1e^{-2}, 1e^{-3}, 1e^{-4}\}$ (as opposed to $\{1e^{-1}, 1e^{-2}, 1e^{-3}, 1e^{-4}\}$) as we found that this resulted in significantly higher validation accuracy for CIFAR-10 and CIFAR-100. Note that for the CIFAR datasets, we use WideResNet-16-4, while the authors of DOE, use WideResNet-40-2.
\vfill

\begin{table}[h]
\centering  

\caption{Hyper-parameter Tuning Strategies}
\label{tab:param-tuning}
\begin{tabular}{l c c }

\midrule

\textbf{Method} & \textbf{Tuning Method} & \textbf{Parameter Ranges} \\

\midrule
\textbf{Energy-FT} &  Random Search & $m_\text{in}$: -30 to 0 \\
& & $m_\text{ood}$: -30 to 0\\
\midrule
\textbf{PreLoad} & Bayesian Optimization & lr: $1.0\times 10^{-3}$ to $5\times 10^{-1}$ \\
& & wd: $1.0 \times 10^{-5}$ to $1.0 \times 10^{-3}$\\
\midrule
\textbf{PreLoad-FT-Init} & Random Search &  $\lambda$: $1.0 \times 10^{-2}$ to 1.0  \\
& & lr: $1.0 \times 10^{-4}$ to $1.0 \times 10^{-1}$  \\
& & wd: $1.0 \times 10^{-6}$ to $1.0$\\
\midrule
\textbf{PreLoad-FT} & Random Search &  $\lambda$: $1.0 \times 10^{-2}$ to 1.0  \\
& & lr: $1.0 \times 10^{-4}$ to $1.0 \times 10^{-1}$\\
& & wd: $1.0 \times 10^{-6}$ to $1.0 \times 10^{-3}$\\

\midrule
\end{tabular}
\end{table}

\begin{table}[h]
  \caption{MNIST Hyper-Parameters}
  \label{tab:param_mnist}
    \resizebox{\linewidth}{!}{%
    \begin{tabular}{l c c c c c c}

\midrule

\textbf{Methods} & \textbf{Optimizer} & \textbf{Learning Rate} & \textbf{Weight Decay}  & \textbf{$\lambda$} & \textbf{In Margin} & \textbf{Out Margin} \\

\midrule

Standard & Adam & $1.0 \times 10^{-3}$ & $5.0 \times 10^{-4}$  & - & - & - \\
NC & Adam & $1.0 \times 10^{-3}$ & $5.0 \times 10^{-4}$ &  - & - & -  \\
OE & Adam & $1.0 \times 10^{-3}$ & $5.0 \times 10^{-4}$ & $5.0 \times 10^{-1}$ & - & -  \\
Pre-Load & Adam & $1.0 \times 10^{-3}$ & $5.0 \times 10^{-4}$  & $1.0 \times 10^0$ & - & -  \\
OE-FT  & Adam & $1.0 \times 10^{-3}$ & $5.0 \times 10^{-4}$ &  $5.0 \times 10^{-1}$ & - & -  \\
DOE-FT  & Adam & $1.0 \times 10^{-3}$ & $5.0 \times 10^{-4}$ &  $1.0 \times 10^0$  & - & -  \\
Energy-FT & Adam & $1.0 \times 10^{-3}$ & $5.0 \times 10^{-4}$  & $1.0 \times 10^{-1}$ & \textbf{-3.6} & \textbf{-25.0}  \\
PreLoad-FT-Init & Adam & $\mathbf{4.1 \times 10^{-3}}$ & $\mathbf{3.1 \times 10^{-4}}$  & $\mathbf{8.0 \times 10^{-1}}$ & - & -  \\
PreLoad-FT & Adam & $\mathbf{4.1 \times 10^{-3}}$ & $\mathbf{3.1 \times 10^{-4}}$  & $\mathbf{8.0 \times 10^{-1}}$ & - & -  \\
\midrule

\end{tabular}
}
\end{table}

\begin{table}[h]
  \caption{F-MNIST Hyper-Parameters}
  \label{tab:param_fmnist}
    \resizebox{\linewidth}{!}{%
    \begin{tabular}{l c c  c c c c}

\midrule

\textbf{Methods} & \textbf{Optimizer} & \textbf{Learning Rate} & \textbf{Weight Decay}  & \textbf{$\lambda$} & \textbf{In Margin} & \textbf{Out Margin} \\

\midrule

Standard & Adam & $1.0 \times 10^{-3}$ & $5.0 \times 10^{-4}$  & - & - & - \\
NC & Adam & $1.0 \times 10^{-3}$ & $5.0 \times 10^{-4}$  & - & - & -  \\
OE & Adam & $1.0 \times 10^{-3}$ & $5.0 \times 10^{-4}$  & $5.0 \times 10^{-1}$ & - & -  \\
Pre-Load & Adam & $1.0 \times 10^{-3}$ & $5.0 \times 10^{-4}$  & $1.0 \times 10^0$ & - & -  \\
OE-FT  & Adam & $1.0 \times 10^{-3}$ & $5.0 \times 10^{-4}$ &  $5.0 \times 10^{-1}$ & - & -  \\
DOE-FT  & Adam & $1.0 \times 10^{-3}$ & $5.0 \times 10^{-4}$ &  $1.0 \times 10^0$  & - & -  \\
Energy-FT & Adam & $1.0 \times 10^{-3}$ & $5.0 \times 10^{-4}$  & $1.0 \times 10^{-1}$ & $\mathbf{6.4 \times 10^{-2}}$ & \textbf{-4.0}  \\
PreLoad-FT-Init & Adam & $\mathbf{6.0 \times 10^{-3}}$ & $\mathbf{2.5 \times 10^{-3}}$  & $\mathbf{2.6 \times 10^{-2}}$ & - & -  \\
PreLoad-FT & Adam & $\mathbf{6.0 \times 10^{-3}}$ & $\mathbf{2.5 \times 10^{-3}}$  & $\mathbf{2.6 \times 10^{-2}}$ & - & -  \\
\midrule

\end{tabular}
}
\end{table}

\begin{table}
  \caption{SVHN Hyper-Parameters}
  \label{tab:param_svhn}
    \resizebox{\linewidth}{!}{%
    \begin{tabular}{l c c c c c c c}

\midrule

\textbf{Methods} & \textbf{Optimizer} & \textbf{Learning Rate} & \textbf{Weight Decay} & \textbf{Momentum} & \textbf{$\lambda$} & \textbf{In Margin} & \textbf{Out Margin} \\

\midrule

Standard & SGD & $1.0 \times 10^{-1}$ & $5.0 \times 10^{-4}$ & $9.0 \times 10^{-1}$ & - & - & - \\
NC & SGD & $1.0 \times 10^{-1}$ & $5.0 \times 10^{-4}$ & $9.0 \times 10^{-1}$ & - & - & -  \\
OE & SGD & $1.0 \times 10^{-1}$ & $5.0 \times 10^{-4}$ & $9.0 \times 10^{-1}$ & $5.0 \times 10^{-1}$ & - & -  \\
Pre-Load & SGD & $1.0 \times 10^{-1}$ & $5.0 \times 10^{-4}$ & $9.0 \times 10^{-1}$ & $1.0 \times 10^0$ & - & -  \\
OE-FT  & SGD & $1.0 \times 10^{-3}$ & $5.0 \times 10^{-4}$ & $9.0 \times 10^{-1}$ & $5.0 \times 10^-1$ & - & -  \\
DOE-FT  & SGD & $1.0 \times 10^{-3}$ & $5.0 \times 10^{-4}$ & $9.0 \times 10^{-1}$ & $1.0 \times 10^0$  & - & -  \\
Energy-FT & SGD & $1.0 \times 10^{-3}$ & $5.0 \times 10^{-4}$ & $9.0 \times 10^{-1}$ & $1.0 \times 10^{-1}$ & \textbf{-5.7} & \textbf{-12.3}  \\
PreLoad-FT-Init & SGD & $\mathbf{2.9 \times 10^{-2}}$ & $\mathbf{2.8 \times 10^{-6}}$ & $9.0 \times 10^{-1}$ & $\mathbf{2.2 \times 10^{-1}}$ & - & -  \\
PreLoad-FT & SGD & $\mathbf{2.9 \times 10^{-2}}$ & $\mathbf{2.8 \times 10^{-6}}$ & $9.0 \times 10^{-1}$ & $\mathbf{2.2 \times 10^{-1}}$ & - & -  \\
\midrule

\end{tabular}
}
\end{table}

\begin{table}
  \caption{CIFAR-10 Hyper-Parameters}
  \label{tab:param_cifar10}
    \resizebox{\linewidth}{!}{%
    \begin{tabular}{l c c c c c c c}

\midrule

\textbf{Methods} & \textbf{Optimizer} & \textbf{Learning Rate} & \textbf{Weight Decay} & \textbf{Momentum} & \textbf{$\lambda$} & \textbf{In Margin} & \textbf{Out Margin} \\

\midrule

Standard & SGD & $1.0 \times 10^{-1}$ & $5.0 \times 10^{-4}$ & $9.0 \times 10^{-1}$ & - & - & - \\
NC & SGD & $1.0 \times 10^{-1}$ & $5.0 \times 10^{-4}$ & $9.0 \times 10^{-1}$ & - & - & -  \\
OE & SGD & $1.0 \times 10^{-1}$ & $5.0 \times 10^{-4}$ & $9.0 \times 10^{-1}$ & $5.0 \times 10^{-1}$ & - & -  \\
Pre-Load & SGD & $\mathbf{7.3 \times 10^{-2}}$ & $\mathbf{7.6 \times 10^{-4}}$ & $9.0 \times 10^{-1}$ & $1.0 \times 10^0$ & - & -  \\
OE-FT  & SGD & $1.0 \times 10^{-3}$ & $5.0 \times 10^{-4}$ & $9.0 \times 10^{-1}$ & $5.0 \times 10^{-1}$ & - & -  \\
DOE-FT  & SGD & $1.0 \times 10^{-3}$ & $5.0 \times 10^{-4}$ & $9.0 \times 10^{-1}$ & $1.0 \times 10^0$  & - & -  \\
Energy-FT & SGD & $1.0 \times 10^{-3}$ & $5.0 \times 10^{-4}$ & $9.0 \times 10^{-1}$ & $1.0 \times 10^{-1}$ & \textbf{-9.9} & \textbf{-5.7}  \\
PreLoad-FT-Init & SGD & $\mathbf{6.1 \times 10^{-2}}$ & $\mathbf{1.8 \times 10^{-6}}$ & $9.0 \times 10^{-1}$ & $\mathbf{2.0 \times 10^{-2}}$ & - & -  \\
PreLoad-FT & SGD & $\mathbf{6.1 \times 10^{-2}}$ & $\mathbf{1.8 \times 10^{-6}}$ & $9.0 \times 10^{-1}$ & $\mathbf{2.0 \times 10^{-2}}$ & - & -  \\
\midrule

\end{tabular}
}
\end{table}

\begin{table}
  \caption{CIFAR-100 Hyper-Parameters}
  \label{tab:param_cifar100}
    \resizebox{\linewidth}{!}{%
    \begin{tabular}{l c c c c c c c}

\midrule

\textbf{Methods} & \textbf{Optimizer} & \textbf{Learning Rate} & \textbf{Weight Decay} & \textbf{Momentum} & \textbf{$\lambda$} & \textbf{In Margin} & \textbf{Out Margin} \\

\midrule

Standard & SGD & $1.0 \times 10^{-1}$ & $5.0 \times 10^{-4}$ & $9.0 \times 10^{-1}$ & - & - & - \\
NC & SGD & $1.0 \times 10^{-1}$ & $5.0 \times 10^{-4}$ & $9.0 \times 10^{-1}$ & - & - & -  \\
OE & SGD & $1.0 \times 10^{-1}$ & $5.0 \times 10^{-4}$ & $9.0 \times 10^{-1}$ & $5.0 \times 10^{-1}$ & - & -  \\
Pre-Load & SGD & $\mathbf{4.5 \times 10^{-1}}$ & $\mathbf{1.2 \times 10^{-4}}$ & $9.0 \times 10^{-1}$ & $1.0 \times 10^0$ & - & -  \\
OE-FT  & SGD & $1.0 \times 10^{-3}$ & $5.0 \times 10^{-4}$ & $9.0 \times 10^{-1}$ & $5.0 \times 10^{-1}$ & - & -  \\
DOE-FT  & SGD & $1.0 \times 10^{-3}$ & $5.0 \times 10^{-4}$ & $9.0 \times 10^{-1}$ & $1.0 \times 10^0$  & - & -  \\
Energy-FT & SGD & $1.0 \times 10^{-3}$ & $5.0 \times 10^{-4}$ & $9.0 \times 10^{-1}$ & $1.0 \times 10^{-1}$ & \textbf{-14.5} & \textbf{-10.3}  \\
PreLoad-FT-Init & SGD & $\mathbf{6.3 \times 10^{-3}}$ & $\mathbf{1.1 \times 10^{-4}}$ & $9.0 \times 10^{-1}$ & $\mathbf{2.3 \times 10^{-2}}$ & - & -  \\
PreLoad-FT & SGD & $\mathbf{6.3 \times 10^{-3}}$ & $\mathbf{1.1 \times 10^{-4}}$ & $9.0 \times 10^{-1}$ & $\mathbf{2.3 \times 10^{-2}}$ & - & -  \\
\midrule

\end{tabular}
}
\end{table}

\subsection{OOD Test Sets}
\label{sup:data}

In addition to MNIST, F-MNIST, SVHN, CIFAR-10 and CIFAR-100, we evaluate on the following OOD test sets.

\begin{itemize}
    \item E-MNIST consists of handwritten letters and is in the same format as MNIST~\citep{cohen2017emnist}.
    \item K-MNIST consists of handwritten Japanese (Hiragana script) and is in the same format as MNIST~\citep{clanuwat2018deep}.
    \item CIFAR-Gr consists of CIFAR-10 images converted to greyscale.
    \item LSUN-CR consists of real images of classrooms~\citep{yu2015lsun}.
    \item FMNIST-$3$D consists of F-MNIST images converted from single channel to three channels with identical values.

\end{itemize}

  \section{Additional Results}
  \label{sup:results}
  In Table~\ref{tab:complete-fpr95}, we present the complete FPR-$95$ results for all the methods and datasets that were used to compute the average results reported in Table~\ref{tab:fpr_avg}. Note that results for Far-Away and Far-Away-RD are not included in the averages. Additionally, Table~\ref{tab:complete-auroc} presents results with the AUROC metric. AUROC is the area under the receiver operator curve (ROC). The ROC plots the TPR against FPR. AUROC can be interpreted as the probability that a model under test ranks a random positive sample higher than a random negative sample. We report AUROC as a percentage between 0 and 100 where higher the better. Tables~\ref{tab:accuracy} and \ref{tab:calibration} present the accuracy and calibration scores respectively.

\begin{table}
  \caption{FPR-95, Complete Results }
  \label{tab:complete-fpr95}
   \resizebox{\linewidth}{!}{%
        \begin{tabular}{l c c c c c c c c c }

\midrule

\textbf{Datasets} & \textbf{Standard} & \textbf{DDU} & \textbf{NC} & \textbf{OE} & \textbf{PreLoad} & \textbf{OE-FT} & \textbf{DOE-FT} & \textbf{Energy-FT} & \textbf{PreLoad-FT} \\

\midrule

\textbf{MNIST} & & & & & & & & \\
F-MNIST & 8.1$\pm$0.9 & 21.8$\pm$1.1 & 0.0$\pm$0.0 & 0.2$\pm$0.0 & 0.0$\pm$0.0 & 0.3$\pm$0.1 & 0.0$\pm$0.0  & 3.7$\pm$0.7 & 0.1$\pm$0.0 \\
E-MNIST & 32.1$\pm$0.4 & 18.5$\pm$0.3 & 17.0$\pm$0.5 & 27.4$\pm$0.2 & 29.7$\pm$0.4 & 24.8$\pm$0.3 & 22.9$\pm$0.4  & 36.5$\pm$0.3 & 33.1$\pm$2.8 \\
K-MNIST & 10.9$\pm$0.3 & 4.6$\pm$0.1 & 3.1$\pm$0.4 & 5.4$\pm$0.2 & 9.8$\pm$0.6 & 3.4$\pm$0.2 & 2.9$\pm$0.2 & 10.1$\pm$0.6 & 17.2$\pm$3.5 \\
CIFAR-Gr & 0.0$\pm$0.0 & 62.6$\pm$8.2 & 0.0$\pm$0.0 & 0.0$\pm$0.0 & 0.0$\pm$0.0 & 0.0$\pm$0.0 & 0.0$\pm$0.0 & 0.0$\pm$0.0 & 0.0$\pm$0.0 \\
Uniform & 14.1$\pm$7.8 & 81.3$\pm$13.1 & 0.0$\pm$0.0 & 0.0$\pm$0.0 & 0.0$\pm$0.0 & 0.0$\pm$0.0 & 0.0$\pm$0.0 & 0.0$\pm$0.0 & 0.0$\pm$0.0 \\
Smooth & 0.0$\pm$0.0 & 97.5$\pm$2.4 & 0.0$\pm$0.0 & 0.0$\pm$0.0 & 0.0$\pm$0.0 & 0.0$\pm$0.0 & 0.0$\pm$0.0 & 0.0$\pm$0.0 & 0.0$\pm$0.0 \\
FarAway & 100.0$\pm$0.0 & 0.0$\pm$0.0 & 0.0$\pm$0.0 & 56.6$\pm$19.6 & 0.0$\pm$0.0 & 99.0$\pm$0.4 & 56.8$\pm$18.1 & 100.0$\pm$0.0 & 0.0$\pm$0.0 \\
FarAway-RD & 99.9$\pm$0.0 & 0.0$\pm$0.0 & 99.9$\pm$0.1 & 99.8$\pm$0.0 & 0.0$\pm$0.0 & 99.5$\pm$0.1 & 99.8$\pm$0.1 & 100.0$\pm$0.0 & 0.0$\pm$0.0 \\

\midrule

\textbf{F-MNIST} & & & & & & & & \\
MNIST & 74.8$\pm$1.4 & 0.6$\pm$0.2 & 6.8$\pm$0.5 & 65.4$\pm$1.1 & 6.7$\pm$2.1 & 68.4$\pm$1.5 & 51.3$\pm$0.9 & 36.9$\pm$3.5 & 28.8$\pm$3.4 \\
E-MNIST & 72.3$\pm$0.7 & 2.6$\pm$0.6 & 1.7$\pm$0.2 & 55.4$\pm$1.6 & 1.4$\pm$0.3 & 60.4$\pm$0.8 & 32.6$\pm$2.2 & 28.5$\pm$1.7 & 15.2$\pm$3.2 \\
K-MNIST & 73.6$\pm$0.6 & 0.4$\pm$0.1 & 4.7$\pm$0.6 & 58.1$\pm$0.9 & 5.7$\pm$1.2 & 60.6$\pm$1.0 & 38.6$\pm$1.0 & 21.1$\pm$1.3 & 22.6$\pm$2.6 \\
CIFAR-Gr & 85.4$\pm$1.0 & 84.3$\pm$3.8 & 0.0$\pm$0.0 & 0.0$\pm$0.0 & 0.0$\pm$0.0 & 0.1$\pm$0.0 & 0.0$\pm$0.0 & 0.1$\pm$0.0 & 0.0$\pm$0.0 \\
Uniform & 91.9$\pm$2.8 & 33.2$\pm$18.9 & 0.1$\pm$0.1 & 11.3$\pm$9.6 & 0.0$\pm$0.0 & 0.3$\pm$0.2 & 2.0$\pm$0.9 & 0.3$\pm$0.1 & 2.0$\pm$1.3 \\
Smooth & 25.6$\pm$1.4 & 89.6$\pm$0.6 & 0.0$\pm$0.0 & 0.2$\pm$0.0 & 0.0$\pm$0.0 & 0.4$\pm$0.1 & 0.0$\pm$0.0 & 0.2$\pm$0.1 & 5.8$\pm$5.8 \\
FarAway & 100.0$\pm$0.0 & 0.0$\pm$0.0 & 53.5$\pm$22.5 & 100.0$\pm$0.0 & 0.0$\pm$0.0 & 100.0$\pm$0.0 & 99.6$\pm$0.4 & 38.4$\pm$8.9 & 0.0$\pm$0.0 \\
FarAway-RD & 100.0$\pm$0.0 & 0.0$\pm$0.0 & 100.0$\pm$0.0 & 100.0$\pm$0.0 & 0.0$\pm$0.0 & 100.0$\pm$0.0 & 100.0$\pm$0.0 & 81.6$\pm$8.8 & 0.0$\pm$0.0 \\

\midrule

\textbf{SVHN} & & & & & & & & \\
CIFAR-10 & 20.2$\pm$0.6 & 8.3$\pm$0.5 & 0.0$\pm$0.0 & 0.0$\pm$0.0 & 0.0$\pm$0.0 & 0.1$\pm$0.0 & 0.0$\pm$0.0 & 4.7$\pm$0.2 & 0.0$\pm$0.0 \\
LSUN-CR & 24.8$\pm$0.9 & 2.5$\pm$0.5 & 0.0$\pm$0.0 & 0.0$\pm$0.0 & 0.0$\pm$0.0 & 0.0$\pm$0.0 & 0.0$\pm$0.0 & 3.5$\pm$0.3 & 0.0$\pm$0.0 \\
CIFAR-100 & 23.4$\pm$0.6 & 8.9$\pm$0.4 & 0.0$\pm$0.0 & 0.1$\pm$0.0 & 0.0$\pm$0.0 & 0.5$\pm$0.0  & 0.1$\pm$0.0 & 7.5$\pm$0.2 & 0.0$\pm$0.0 \\
FMNIST-3D & 26.5$\pm$0.4 & 25.8$\pm$2.2 & 0.0$\pm$0.0 & 0.0$\pm$0.0 & 0.0$\pm$0.0 & 0.6$\pm$0.3  & 0.0$\pm$0.0 & 14.9$\pm$1.1 & 0.0$\pm$0.0 \\
Uniform & 33.5$\pm$3.9 & 0.0$\pm$0.0 & 0.0$\pm$0.0 & 0.0$\pm$0.0 & 0.0$\pm$0.0 & 0.0$\pm$0.0  & 0.0$\pm$0.0 & 0.9$\pm$0.1 & 0.0$\pm$0.0 \\
Smooth & 14.0$\pm$0.5 & 2.5$\pm$0.4 & 12.6$\pm$1.2 & 10.0$\pm$0.3 & 6.8$\pm$1.0 & 8.6$\pm$0.3  & 7.4$\pm$0.4 & 11.8$\pm$1.3 & 4.6$\pm$0.2 \\
FarAway & 99.4$\pm$0.2 & 0.0$\pm$0.0 & 80.0$\pm$20.0 & 99.4$\pm$0.4 & 0.0$\pm$0.0 & 99.3$\pm$0.3 & 99.9$\pm$0.1 & 100.0$\pm$0.0 & 0.0$\pm$0.0 \\
FarAway-RD & 92.8$\pm$2.1 & 0.0$\pm$0.0 & 80.0$\pm$20.0 & 85.4$\pm$7.6 & 0.0$\pm$0.0 & 93.1$\pm$2.5 & 99.3$\pm$0.6 & 100.0$\pm$0.0 & 0.0$\pm$0.0 \\

\midrule

\textbf{CIFAR-10} & & & & & & & & \\
SVHN & 42.1$\pm$6.5 & 45.1$\pm$2.4 & 0.5$\pm$0.1 & 2.7$\pm$0.6 & 0.4$\pm$0.1 & 8.1$\pm$2.2 & 3.1$\pm$0.5 & 2.9$\pm$0.6 & 4.4$\pm$2.1 \\
LSUN-CR & 50.1$\pm$1.1 & 64.1$\pm$1.7 & 0.6$\pm$0.1 & 6.1$\pm$0.3 & 0.7$\pm$0.2 & 20.7$\pm$1.2 & 11.5$\pm$0.8 & 7.9$\pm$0.3 & 4.3$\pm$0.2 \\
CIFAR-100 & 58.8$\pm$0.6 & 71.0$\pm$0.4 & 26.5$\pm$0.1 & 33.4$\pm$0.3 & 27.3$\pm$0.1 & 44.9$\pm$0.4  & 37.1$\pm$0.4 & 34.0$\pm$0.3 & 35.9$\pm$0.2 \\
FMNIST-3D & 38.9$\pm$1.1 & 35.7$\pm$5.4 & 3.5$\pm$0.4 & 11.5$\pm$0.6 & 3.9$\pm$0.4 & 15.8$\pm$1.0 & 12.3$\pm$0.9 & 8.2$\pm$0.6 & 7.9$\pm$0.8 \\
Uniform & 76.3$\pm$15.5 & 0.0$\pm$0.0 & 0.0$\pm$0.0 & 0.0$\pm$0.0 & 0.0$\pm$0.0 & 0.0$\pm$0.0 & 0.0$\pm$0.0 & 19.8$\pm$19.6 & 0.0$\pm$0.0 \\
Smooth & 40.4$\pm$5.1 & 12.4$\pm$1.8 & 3.4$\pm$0.7 & 16.3$\pm$3.0 & 3.6$\pm$0.6 & 30.2$\pm$1.5 & 26.4$\pm$4.1 & 20.6$\pm$2.6 & 19.4$\pm$6.0 \\
FarAway & 100.0$\pm$0.0 & 0.0$\pm$0.0 & 20.0$\pm$20.0 & 100.0$\pm$0.0 & 0.0$\pm$0.0 & 100.0$\pm$0.0 & 100.0$\pm$0.0 & 100.0$\pm$0.0 & 0.0$\pm$0.0 \\
FarAway-RD & 99.7$\pm$0.2 & 0.0$\pm$0.0 & 40.0$\pm$24.5 & 100.0$\pm$0.0 & 0.0$\pm$0.0 & 99.5$\pm$0.3  & 99.6$\pm$0.4 & 100.0$\pm$0.0 & 0.0$\pm$0.0 \\

\midrule

\textbf{CIFAR-100} & & & & & & & & \\
SVHN & 71.0$\pm$1.1 & 82.4$\pm$4.1 & 21.9$\pm$5.3 & 60.2$\pm$3.0 & 19.0$\pm$2.6 & 68.2$\pm$1.8 & 58.6$\pm$6.4 & 50.0$\pm$6.1 & 46.1$\pm$6.5 \\
LSUN-CR & 78.1$\pm$0.7 & 88.1$\pm$0.6 & 12.1$\pm$0.8 & 62.8$\pm$2.1 & 17.7$\pm$0.6 & 71.7$\pm$0.8 & 39.6$\pm$1.1 & 48.9$\pm$1.1 & 36.6$\pm$0.7 \\
CIFAR-10 & 79.2$\pm$0.4 & 92.8$\pm$0.1 & 79.4$\pm$0.3 & 80.7$\pm$0.4 & 80.5$\pm$0.6 & 80.1$\pm$0.6 & 85.8$\pm$0.5 & 79.7$\pm$0.3 & 88.9$\pm$0.1 \\
FMNIST-3D & 65.6$\pm$2.2 & 90.2$\pm$0.9 & 7.2$\pm$0.5 & 56.0$\pm$3.0 & 13.9$\pm$1.5 & 61.3$\pm$1.6 & 32.0$\pm$1.6 & 34.4$\pm$3.6 & 46.4$\pm$2.8 \\
Uniform & 96.6$\pm$3.3 & 0.0$\pm$0.0 & 0.0$\pm$0.0 & 41.0$\pm$22.1 & 0.0$\pm$0.0 & 75.8$\pm$14.6 & 42.2$\pm$16.3 & 22.7$\pm$19.1 & 0.0$\pm$0.0 \\
Smooth & 72.7$\pm$1.3 & 42.5$\pm$1.2 & 44.1$\pm$4.9 & 60.4$\pm$3.4 & 24.2$\pm$4.1 & 66.6$\pm$1.2 & 67.7$\pm$5.3 & 61.0$\pm$2.7 & 19.0$\pm$3.7 \\
FarAway & 100.0$\pm$0.0 & 0.0$\pm$0.0 & 20.0$\pm$20.0 & 100.0$\pm$0.0 & 0.0$\pm$0.0 & 100.0$\pm$0.0  & 100.0$\pm$0.0 & 100.0$\pm$0.0 & 0.0$\pm$0.0 \\
FarAway-RD & 100.0$\pm$0.0 & 0.0$\pm$0.0 & 20.0$\pm$20.0 & 100.0$\pm$0.0 & 0.0$\pm$0.0 & 100.0$\pm$0.0  & 100.0$\pm$0.0 & 100.0$\pm$0.0 & 0.0$\pm$0.0 \\

\midrule

\end{tabular}

    }
\end{table}

\begin{table}
  \caption{AUROC, Complete Results}
  \label{tab:complete-auroc}
   \resizebox{\linewidth}{!}{%
    \begin{tabular}{l c c c c c c c c c}

\midrule

\textbf{Datasets} & \textbf{Standard} & \textbf{DDU} & \textbf{NC} & \textbf{OE} & \textbf{PreLoad} & \textbf{OE-FT}  &\textbf{DOE-FT} & \textbf{Energy-FT} & \textbf{PreLoad-FT} \\

\midrule

\textbf{MNIST} & & & & & & & & \\
F-MNIST & 98.3$\pm$0.1 & 96.9$\pm$0.1 & 100.0$\pm$0.0 & 99.8$\pm$0.0 & 100.0$\pm$0.0 & 99.7$\pm$0.0 & 99.9$\pm$0.0 & 99.2$\pm$0.2 & 100.0$\pm$0.0 \\
E-MNIST & 90.2$\pm$0.1 & 91.6$\pm$0.1 & 96.4$\pm$0.1 & 92.8$\pm$0.0 & 89.0$\pm$0.2 & 93.7$\pm$0.1 & 94.5$\pm$0.1 & 90.2$\pm$0.1 & 87.7$\pm$1.1 \\
K-MNIST & 97.5$\pm$0.1 & 98.9$\pm$0.0 & 99.3$\pm$0.1 & 98.6$\pm$0.1 & 97.5$\pm$0.2 & 99.0$\pm$0.1 & 99.1$\pm$0.0 & 97.7$\pm$0.1 & 95.2$\pm$1.1 \\
CIFAR-Gr & 99.8$\pm$0.0 & 94.3$\pm$0.4 & 100.0$\pm$0.0 & 100.0$\pm$0.0 & 100.0$\pm$0.0 & 100.0$\pm$0.0 & 100.0$\pm$0.0  & 100.0$\pm$0.0 & 100.0$\pm$0.0 \\
Uniform & 96.7$\pm$0.5 & 93.2$\pm$0.8 & 100.0$\pm$0.0 & 100.0$\pm$0.0 & 100.0$\pm$0.0 & 100.0$\pm$0.0 & 100.0$\pm$0.0 & 99.6$\pm$0.1 & 100.0$\pm$0.0 \\
Smooth & 100.0$\pm$0.0 & 89.4$\pm$2.7 & 100.0$\pm$0.0 & 100.0$\pm$0.0 & 100.0$\pm$0.0 & 100.0$\pm$0.0 & 100.0$\pm$0.0 & 100.0$\pm$0.0 & 100.0$\pm$0.0 \\
FarAway & 1.1$\pm$0.1 & 100.0$\pm$0.0 & 100.0$\pm$0.0 & 59.8$\pm$16.0 & 100.0$\pm$0.0 & 7.3$\pm$4.6 & 43.9$\pm$18.1 & 0.0$\pm$0.0 & 100.0$\pm$0.0 \\
FarAway-RD & 1.2$\pm$0.1 & 100.0$\pm$0.0 & 16.4$\pm$1.0 & 1.5$\pm$0.1 & 100.0$\pm$0.0 & 2.1$\pm$0.3  & 0.6$\pm$0.1 & 0.0$\pm$0.0 & 100.0$\pm$0.0 \\

\midrule

\textbf{F-MNIST} & & & & & & & & \\
MNIST & 80.1$\pm$0.6 & 99.7$\pm$0.1 & 98.4$\pm$0.1 & 84.4$\pm$0.7 & 98.4$\pm$0.5 & 82.9$\pm$0.4 & 88.4$\pm$0.4 & 90.5$\pm$1.0 & 92.4$\pm$1.2 \\
E-MNIST & 82.6$\pm$0.4 & 99.3$\pm$0.1 & 99.6$\pm$0.1 & 88.8$\pm$0.7 & 99.7$\pm$0.1 & 87.5$\pm$0.6  & 93.8$\pm$0.7 & 93.5$\pm$0.4 & 96.1$\pm$1.0 \\
K-MNIST & 83.1$\pm$0.4 & 99.7$\pm$0.0 & 99.0$\pm$0.1 & 89.9$\pm$0.2 & 98.9$\pm$0.2 & 89.1$\pm$0.1  & 93.8$\pm$0.2 & 95.7$\pm$0.3 & 95.1$\pm$0.6 \\
CIFAR-Gr & 83.8$\pm$0.6 & 85.0$\pm$1.2 & 100.0$\pm$0.0 & 100.0$\pm$0.0 & 100.0$\pm$0.0 & 100.0$\pm$0.0 & 100.0$\pm$0.0 & 99.8$\pm$0.0 & 100.0$\pm$0.0 \\
Uniform & 74.3$\pm$3.3 & 95.7$\pm$1.2 & 99.9$\pm$0.1 & 98.6$\pm$1.1 & 100.0$\pm$0.0 & 99.9$\pm$0.0  & 99.7$\pm$0.2 & 99.8$\pm$0.1 & 99.6$\pm$0.3 \\
Smooth & 95.9$\pm$0.2 & 59.5$\pm$1.2 & 100.0$\pm$0.0 & 100.0$\pm$0.0 & 100.0$\pm$0.0 & 99.9$\pm$0.0  & 100.0$\pm$0.0  & 99.1$\pm$0.0 & 98.6$\pm$1.3 \\
FarAway & 2.6$\pm$0.2 & 100.0$\pm$0.0 & 49.0$\pm$21.5 & 2.2$\pm$0.3 & 100.0$\pm$0.0 & 1.8$\pm$0.2  & 1.8$\pm$1.5 & 61.6$\pm$8.9 & 100.0$\pm$0.0 \\
FarAway-RD & 2.8$\pm$0.2 & 100.0$\pm$0.0 & 5.3$\pm$0.5 & 2.3$\pm$0.2 & 100.0$\pm$0.0 & 1.7$\pm$0.2  & 0.4$\pm$0.1 & 18.5$\pm$8.8 & 100.0$\pm$0.0 \\

\midrule

\textbf{SVHN} & & & & & & & & \\
CIFAR-10 & 95.9$\pm$0.1 & 98.3$\pm$0.1 & 100.0$\pm$0.0 & 100.0$\pm$0.0 & 100.0$\pm$0.0 & 99.9$\pm$0.0 & 100.0$\pm$0.0 & 98.8$\pm$0.0 & 100.0$\pm$0.0 \\
LSUN-CR & 95.6$\pm$0.1 & 99.1$\pm$0.0 & 100.0$\pm$0.0 & 100.0$\pm$0.0 & 100.0$\pm$0.0 & 100.0$\pm$0.0 & 100.0$\pm$0.0 & 99.1$\pm$0.1 & 100.0$\pm$0.0 \\
CIFAR-100 & 95.1$\pm$0.1 & 98.2$\pm$0.1 & 100.0$\pm$0.0 & 100.0$\pm$0.0 & 100.0$\pm$0.0 & 99.9$\pm$0.0 & 100.0$\pm$0.0 & 98.2$\pm$0.0 & 100.0$\pm$0.0 \\
FMNIST-3D & 94.5$\pm$0.4 & 96.0$\pm$0.3 & 100.0$\pm$0.0 & 100.0$\pm$0.0 & 100.0$\pm$0.0 & 99.7$\pm$0.1 & 100.0$\pm$0.0 & 96.9$\pm$0.3 & 100.0$\pm$0.0 \\
Uniform & 93.2$\pm$0.9 & 100.0$\pm$0.0 & 100.0$\pm$0.0 & 100.0$\pm$0.0 & 100.0$\pm$0.0 & 100.0$\pm$0.0 & 100.0$\pm$0.0 & 99.6$\pm$0.0 & 100.0$\pm$0.0 \\
Smooth & 96.9$\pm$0.2 & 99.2$\pm$0.1 & 97.1$\pm$0.3 & 98.0$\pm$0.1 & 98.4$\pm$0.2 & 98.3$\pm$0.1 & 98.5$\pm$0.1 & 97.4$\pm$0.2 & 98.9$\pm$0.0 \\
FarAway & 1.8$\pm$0.6 & 100.0$\pm$0.0 & 20.2$\pm$20.0 & 2.0$\pm$1.2 & 100.0$\pm$0.0 & 3.9$\pm$1.1 & 1.1$\pm$0.2 & 0.0$\pm$0.0 & 100.0$\pm$0.0 \\
FarAway-RD & 19.9$\pm$5.9 & 100.0$\pm$0.0 & 20.2$\pm$20.0 & 38.0$\pm$16.5 & 100.0$\pm$0.0 & 23.0$\pm$7.2 & 3.3$\pm$2.3 & 0.0$\pm$0.0 & 100.0$\pm$0.0 \\

\midrule

\textbf{CIFAR-10} & & & & & & & & \\
SVHN & 94.4$\pm$1.0 & 91.1$\pm$0.5 & 99.6$\pm$0.1 & 99.4$\pm$0.1 & 99.7$\pm$0.1 & 98.6$\pm$0.3 & 99.1$\pm$0.1 & 98.9$\pm$0.1 & 99.1$\pm$0.3 \\
LSUN-CR & 92.6$\pm$0.2 & 87.4$\pm$0.3 & 99.7$\pm$0.0 & 99.0$\pm$0.0 & 99.7$\pm$0.0 & 97.1$\pm$0.1  & 97.9$\pm$0.1 & 98.4$\pm$0.0 & 99.1$\pm$0.0 \\
CIFAR-100 & 90.0$\pm$0.0 & 80.7$\pm$0.2 & 94.9$\pm$0.0 & 94.1$\pm$0.1 & 94.3$\pm$0.1 & 92.3$\pm$0.1 & 93.0$\pm$0.0 & 93.4$\pm$0.0 & 92.9$\pm$0.0 \\
FMNIST-3D & 94.7$\pm$0.2 & 94.5$\pm$0.8 & 99.3$\pm$0.1 & 98.3$\pm$0.1 & 99.1$\pm$0.1 & 97.7$\pm$0.1 & 97.9$\pm$0.2 & 98.4$\pm$0.1 & 98.5$\pm$0.2 \\
Uniform & 88.2$\pm$3.0 & 100.0$\pm$0.0 & 99.8$\pm$0.2 & 100.0$\pm$0.0 & 99.8$\pm$0.0 & 100.0$\pm$0.0  & 100.0$\pm$0.0 & 97.4$\pm$1.8 & 100.0$\pm$0.0 \\
Smooth & 93.6$\pm$1.1 & 96.1$\pm$0.7 & 99.0$\pm$0.1 & 97.5$\pm$0.4 & 98.9$\pm$0.1 & 96.3$\pm$0.4  & 96.5$\pm$0.4 & 96.9$\pm$0.4 & 97.1$\pm$0.7 \\
FarAway & 3.4$\pm$0.1 & 100.0$\pm$0.0 & 80.1$\pm$19.9 & 3.6$\pm$0.0 & 100.0$\pm$0.0 & 5.1$\pm$0.3 & 1.5$\pm$0.4 & 0.0$\pm$0.0 & 100.0$\pm$0.0 \\
FarAway-RD & 5.0$\pm$1.3 & 100.0$\pm$0.0 & 60.3$\pm$24.3 & 3.6$\pm$0.0 & 100.0$\pm$0.0 & 10.1$\pm$3.6 & 34.2$\pm$16.6 & 0.0$\pm$0.0 & 100.0$\pm$0.0 \\

\midrule

\textbf{CIFAR-100} & & & & & & & & \\
SVHN & 82.9$\pm$0.5 & 72.0$\pm$2.4 & 96.1$\pm$0.9 & 86.7$\pm$0.7 & 96.6$\pm$0.3 & 84.0$\pm$0.9 & 89.6$\pm$1.2 & 91.7$\pm$1.0 & 92.1$\pm$1.0 \\
LSUN-CR & 80.1$\pm$0.5 & 73.5$\pm$0.6 & 97.4$\pm$0.1 & 86.6$\pm$0.4 & 96.6$\pm$0.0 & 83.0$\pm$0.4  & 93.1$\pm$0.3 & 92.0$\pm$0.2 & 93.6$\pm$0.1 \\
CIFAR-10 & 77.4$\pm$0.2 & 65.7$\pm$0.3 & 80.8$\pm$0.1 & 77.2$\pm$0.2 & 79.1$\pm$0.5 & 77.1$\pm$0.2  & 74.6$\pm$0.3 & 79.3$\pm$0.1 & 73.9$\pm$0.2 \\
FMNIST-3D & 85.2$\pm$0.9 & 67.6$\pm$1.1 & 98.3$\pm$0.1 & 88.4$\pm$0.7 & 97.2$\pm$0.2 & 86.5$\pm$0.6 & 94.7$\pm$0.2 & 94.7$\pm$0.5 & 92.8$\pm$0.4 \\
Uniform & 62.3$\pm$7.7 & 99.9$\pm$0.1 & 100.0$\pm$0.0 & 87.1$\pm$8.5 & 100.0$\pm$0.0 & 73.8$\pm$10.2 & 94.8$\pm$1.6 & 95.9$\pm$2.9 & 100.0$\pm$0.0 \\
Smooth & 75.4$\pm$1.2 & 84.9$\pm$2.2 & 89.9$\pm$1.2 & 81.1$\pm$1.3 & 95.1$\pm$1.0 & 78.3$\pm$0.7 & 79.9$\pm$1.7 & 82.7$\pm$1.5 & 96.1$\pm$0.8 \\
FarAway & 0.4$\pm$0.0 & 100.0$\pm$0.0 & 80.2$\pm$19.8 & 0.8$\pm$0.0 & 100.0$\pm$0.0 & 0.9$\pm$0.1 & 0.1$\pm$0.1 & 0.0$\pm$0.0 & 100.0$\pm$0.0 \\
FarAway-RD & 1.1$\pm$0.3 & 100.0$\pm$0.0 & 80.2$\pm$19.8 & 0.9$\pm$0.2 & 100.0$\pm$0.0 & 1.0$\pm$0.1  & 1.5$\pm$0.5 & 0.0$\pm$0.0 & 100.0$\pm$0.0 \\

\midrule

\end{tabular}

    }
\end{table}

\begin{table}
\caption{Accuracy}
\label{tab:accuracy}
\centering
\begin{tabular}{l c c c c c }

\midrule

& \textbf{MNIST} & \textbf{F-MNIST} & \textbf{SVHN} & \textbf{CIFAR-10} & \textbf{CIFAR-100} \\

\midrule

Standard & 99.5 & 92.7 & 97.4 & 94.9 & 77.3 \\

\midrule
DDU & 99.5 & 92.7 & 97.4 & 94.9 & 77.3 \\

\midrule
NC & 99.4 & 92.6 & 97.3 & 92.8 & 73.6 \\

\midrule
OE & 99.5 & 92.7 & 97.4 & 95.5 & 77.1 \\

\midrule
PreLoad & 99.5 & 92.3 & 97.3 & 93.5 & 71.9 \\

\midrule
OE-FT & 99.5 & 92.8 & 97.3 & 94.8 & 77.1 \\

\midrule
DOE-FT & 99.5 & 92.8 & 97.3 & 94.6 & 74.1 \\

\midrule
Energy-FT & 99.5 & 92.8 & 97.4 & 94.9 & 76.8 \\

\midrule
PreLoad-FT & 99.5 & 92.4 & 97.1 & 94.5 & 76.8 \\

\midrule
\end{tabular}
\end{table}

\begin{table}
\centering
\caption{Calibration measured using the ECE score}
\label{tab:calibration}
\begin{tabular}{l c c c c c }

\midrule

& \textbf{MNIST} & \textbf{F-MNIST} & \textbf{SVHN} & \textbf{CIFAR-10} & \textbf{CIFAR-100} \\

\midrule

Standard & 7.1 & 12.2 & 8.9 & 10.6 & 13.3 \\

\midrule
DDU & 7.1 & 12.2 & 8.9 & 10.6 & 13.3 \\

\midrule
NC & 7.1 & 7.4 & 9.0 & 15.2 & 8.6 \\

\midrule
OE & 6.4 & 11.5 & 9.1 & 6.1 & 13.1 \\

\midrule
PreLoad & 8.9 & 6.1 & 7.3 & 11.9 & 9.8 \\

\midrule
OE-FT & 7.1 & 11.6 & 10.0 & 5.7 & 16.0 \\

\midrule

DOE-FT & 7.3 & 7.5 & 8.2 & 15.7 & 16.9 \\

\midrule
Energy-FT & 6.7 & 11.8 & 12.2 & 10.3 & 15.5 \\

\midrule
PreLoad-FT & 8.6 & 3.9 & 10.3 & 8.1 & 11.1 \\

\midrule
\end{tabular}
\end{table}

\end{appendices}

\end{document}